\newtheorem{theorem}{Theorem}
\newtheorem{lemma}{Lemma}
\newtheorem{definition}{Definition}
\begin{document}

\title{Online Recommendations for Agents with Discounted Adaptive Preferences}
\author{Arpit Agarwal\thanks{\texttt{agarpit@outlook.com}} \and William Brown\thanks{\texttt{w.brown@columbia.edu}}}
\date{\today}

\maketitle

\begin{abstract}
We consider a bandit recommendations problem in which an agent’s preferences (representing selection probabilities over recommended items) evolve as a function of past selections, according to an unknown \textit{preference model}. In each round, we show a menu of $k$ items (out of $n$ total) to the agent, who then chooses a single item, and we aim to minimize regret with respect to some \textit{target set} (a subset of the item simplex) for adversarial losses over the agent’s choices. Extending the setting from \cite{AgarwalB22}, where uniform-memory agents were considered, here we allow for non-uniform memory in which a discount factor is applied to the agent’s memory vector at each subsequent round. In the ``long-term memory'' regime (when the effective memory horizon scales with $T$ sublinearly), we show that efficient sublinear regret is obtainable with respect to the set of \textit{everywhere instantaneously realizable distributions} (the ``EIRD set'', as formulated in prior work) for any \textit{smooth} preference model. Further, for preferences which are bounded above and below by linear functions of memory weight (we call these ``scale-bounded'' preferences) we give an algorithm which obtains efficient sublinear regret with respect to nearly the \textit{entire} item simplex. We show an NP-hardness result for expanding to targets beyond EIRD in general. In the “short-term memory” regime (when the memory horizon is constant), we show that scale-bounded preferences again enable efficient sublinear regret for nearly the entire simplex even without smoothness if losses do not change too frequently, yet we show an information-theoretic barrier for competing against the EIRD set under arbitrary smooth preference models even when losses are constant.
\end{abstract}

\section{Introduction}

Recommendation systems are an integral part of 
online platforms for  
e-commerce, social networks, and content sharing.
It it well-documented that user preferences change over time in response to content recommendations \citep{flaxman,CurmeiHRH22,abdollahpouri2019managing,DBLP:journals/corr/abs-2007-13019}, leading to self-reinforcing patterns of content consumption which  can have a variety of unintended consequences for the user, platform, and society, including but not limited to loss of revenue and creation of ``filter-bubbles'' and ``echo-chambers'' that drive polarization.
Hence, it is important for recommendation systems to incorporate these adaptivity patterns into user modeling to capture the long-run effects of recommendations on agent preference dynamics.

The approach we take to this problem, building on the framework from \cite{AgarwalB22}, is to formulate online recommendation as a nonlinear reinforcement learning problem over the space of agent preferences. 
Often the objective of recommendation systems is to optimize some function of agent behavior, e.g.\ item purchases or ad revenue, but we cannot force the agent to choose any individual option we present them with. When preferences are adaptive, the menu of recommendations we present to a agent in each interaction determines not only our expected immediate rewards (given the agent's current choice distribution over the menu), but also affects the agent's downstream choice likelihoods over subsequent recommendations (via updates to their preferences), and hence our future landscape of feasible rewards. 
This causal effect occurs on a per-agent basis, breaking the distributional assumptions required by classical approaches such as collaborative filtering. This motivates the importance of establishing rigorous foundations for the dynamics of such problems at the level of individual agents. To evaluate long-run interactions, the framework of regret minimization is then natural to consider, which further allows accommodation of adversarial changes to per-item rewards over time.
While some recent work has adopted the regret minimization perspective for multi-agent recommendation problems \citep{GaitondeKT21,DeanM22,jagadeesan2022supplyside}, strong assumptions are required on the recommendation setting and model of preference evolution, in which only a single recommendation is given each round, which then results in a linear update to preferences. In contrast, the setting we consider allows for unknown and potentially highly nonlinear preference dynamics which can express the complex interactions often present between items (e.g.\ substitute and complement effects, relevant sequential orderings, or genre correlations), as well as to accommodate the practical constraint faced by many systems in which an agent must be shown a menu of multiple items to choose from.
This yields a problem which is non-trivial even in the case of a single agent.

While the setting of \cite{AgarwalB22} accommodates many of these considerations by allowing for multi-item recommendation menus and nonlinear preference dynamics, several major limitations remain present. There, it is assumed that the agent's relative preferences for each item, as functions of the empirical distribution of past item selections, are determined by functions from a known parametric class which satisfies a somewhat strong learnability condition (including e.g.\ constant-degree polynomials), and their algorithm involves learning each function  explicitly. Further, the target set considered for regret minimization is fairly restrictive, only containing item distributions which are always feasible regardless of the agent's current preferences (when an appropriate menu distribution is chosen). We show that both of these restrictions can be removed nearly entirely, 
while also enabling flexibility for the speed at which agent preferences may evolve,
with only minor adjustments which are well-justified from the perspective of preference modeling. 

Our first change is to allow for agents with {\it non-uniform} memory, where a discount factor $\gamma \leq 1$ is applied to the vector of past selections in each round. Initially we assume that $\gamma$ approaches $1$ as $T$ grows, yielding an effective memory horizon (i.e.\ the window of rounds whose influence dominates memory) which is $\Theta(T^c)$, for some $c \in (0, 1]$. For such cases, we show that the parametric learnability assumption can be replaced by assuming only that each function is Lipschitz, and we give an algorithm which obtains sublinear regret with respect to same target set considered by \cite{AgarwalB22}, recovering their $T^{3/4}$ rate in the uniform-memory case (when $\gamma = 1$). 
Next, we identify a property for the agent's preference functions which captures a regime of interpolation between linearly increasing and arbitrary functions, with a multiplicative factor bounding the distance from linearity at each point; we say that such functions are ``scale-bounded''. For agents with such preferences and {\it strictly} sublinear memory horizons (i.e.\ with $c \in (0,1)$) we give an algorithm which obtains sublinear regret with respect to nearly the {\it entire} item simplex. The distance from our target set here to the  simplex boundary is a function of the agent's ``exploration term'' $\lambda > 0$, which may be arbitrarily small in this case. These conditions can be interpreted as requiring that (i) agents are recency-biased, and (ii) preferences generally increase with familiarity, both of which are widely assumed in modeling of agent preferences (see \cite{CurmeiHRH22} for an overview). We give a number of other results regarding efficient menu selection and  constant-memory agents, as well as negative results for further relaxations.

\subsection{Setting}

As in \cite{AgarwalB22}, in each round $t\in [T]$, we (the recommender) must choose $k$ out of $n$ items to show in a menu $K_t$ to the agent, who will choose one of the $k$ items probabilistically.
The choice probability of 
each item depends
on the agent's {\it memory vector} $v_t \in \Delta(n)$, which encodes their history of prior item selections,
as well as their {\it preference model}, which maps memory vectors to preference {\it scores} via functions $f_i(v_t) \in [0,1]$ for each item; selection probabilities for each $i\in K_t$ are proportional to these scores.
We receive bandit adversarial rewards $r_t(i_t) \in [0,1]$ for the agent's item choice $i_t$ in each round. 

Observe that for $k=1$ this collapses to the classical adversarial multi-armed bandit problem, as the agent will choose the single shown item deterministically. Here we take $k > 1$ to be fixed as an input to the problem, which yields drastically more intricate selection dynamics than single-item recommendations due to the role of the preference model; 
we are playing a bandit problem in which we cannot pull arms (items) directly, and the set of {\it instantaneously realizable distributions} of items we may induce (when considering any distribution over menus we may choose in each round) is shifting as a function of the history according to unknown and nonlinear dynamics. As a result, in general it will not be possible to do as well as the best individual item in terms of reward. Even if scores were always {\it uniform}, an agent
would never choose an item more than once every $k$ rounds even if it was shown in every menu; hence, we must be careful in selecting benchmark {\it target sets} (as subsets of the item simplex) with which we can feasibly compete in terms of sublinear regret.

\subsection{Our Results}

We give a series of algorithmic and barrier results for the online recommendations problem of \cite{AgarwalB22} for {\it discounted}-memory agents, where the memory vector updates as $v_{t+1} \propto i_t  + \gamma v_t/(1 - \gamma)$, for some $\gamma \in [0,1]$. We formalize the recommender-agent interaction model and introduce key preliminaries in Section \ref{sec:prelims}.

In Section \ref{sec:biggamma}, we give an algorithm which obtains $o(T)$ regret to the ``$\EIRD$ set'' (as introduced by \cite{AgarwalB22})
for agents with an effective memory horizon of $\Omega(T^c)$ for any $c \in (0,1]$. This target set consists of the {\it everywhere instantaneously realizable distributions} of items, i.e.\ item distributions $p_t \in \Delta(n)$ such that for any memory vector $v_t \in\Delta(n)$ and resulting preference set $\{f_i(v_t)\}$ there is some menu distribution $z_t$ which yields $i_t \sim p_t$ in expectation. Here, we also introduce a new characterization of the set of instantaneously realizable distributions for any $v_t$,  enabling a computationally efficient menu selection step which removes the exponential dependence on $k$ from the linear programming subroutine in prior work.

In Section \ref{sec:biggamma-pi}, we formulate our notion of {\it scale-bounded preference models} and consider a target set which we call the {\it $\phi$-smoothed simplex}, denoted $\Delta^{\phi}(n)$, corresponding to any point in the simplex $\Delta(n)$ mixed with $\phi$ uniform noise. We give an algorithm which obtains $o(T)$ regret with respect to $\Delta^{\phi}(n)$ for agents with a memory horizon of $\Theta(T^c)$, for any $c \in (0,1)$, where is $\phi$ is a function of other parameters (notably, the agent's exploration term $\lambda > 0$) which here can be arbitrarily small. This contrasts with theresults for $\EIRD$, where $\lambda$ must be bounded away from $0$ in order to ensure that the target set is non-empty.

In Section \ref{sec:hardness}, we show that determining the reward of the best distribution which is instantaneously realizable from itself, given a circuit which encodes an agent's preference model, is $\NP$-hard even for fixed rewards. 
We argue that this represents strong evidence that competing with any target set nontrivially larger than $\EIRD$ requires additional structural assumptions on preferences (such as our scale-bounded property).

Finally, in Section \ref{sec:smallgamma} we consider agents with an effective memory horizon of $O(1)$. Here we again highlight the usefulness of the scale-bounded property, and give an algorithm which obtains $o(T)$ regret with respect to $\Delta^{\phi}(n)$ whenever rewards do not change too frequently; in contrast, for general preferences, we show a regret lower bound over the $\EIRD$ set which is quasipolynomial in $n$, even when rewards are fixed (our algorithmic regret bounds are polynomial in all parameters).

Taken as a whole, in addition to providing several new algorithms for efficient regret minimization in online recommendations, our results highlight a number of delicate tradeoffs between the choice of target set, the speed at which an agent's memory is updated, and structural assumptions on preference functions, which may be informative for agent preference modeling more broadly.

\subsection{Related Work}

\paragraph{Stochastic bandits with changing rewards} 
Problems where future rewards are causally affected by actions have been studied in the stochastic multi-armed bandit 
setting \citep{gittins1979bandit, HeidariKR16,LevineCM17,KleinbergI18, DBLP:journals/corr/abs-1802-05693,LeqiKLM21, Laforgue+22, AwasthiBGK22, Papadigenopoulos+22}.
Most recent work on such problems has focused on specific models for reward evolution with motivations such as agent satiation, agent boredom, and congestion, and considers a single action being chosen in each round.
In contrast, our setting allows for adversarial rewards, and preference evolution is determined by interplay between the multiple items we choose in each menu.

\paragraph{Models of preference dynamics}
There has also been substantial work in understanding preference dynamics in recommendation systems, with an emphasis on linear update models.
\cite{Hazla+19, GaitondeKT21} consider a model for political preference dynamics where vector preferences 
drift towards the agreement or disagreement on randomly drawn issues, and 
\cite{DeanM22} study a similar model in the context of personalized recommendations for a single item.  A related model is also considered by \cite{jagadeesan2022supplyside} to study the influence of recommendations on genre formation. Further, interactions between payment incentives and self-reinforcing preferences are studied in a bandit setting by \cite{DBLP:journals/corr/abs-2105-08869}.

\paragraph{Reinforcement learning and recommendation systems}
There is prior work leveraging reinforcement learning for recommendations to maximize long-run rewards \citep{IeJWNAWCCB19, ZhanCLOMBBCC21, ChenBCJBC19}, typically with a focus on empirical evaluation.
There is also a long line of work on studying bias, feedback loops, and ``echo-chamber'' effects at the population level in recommendation systems \citep{flaxman,CurmeiHRH22,abdollahpouri2019managing,DBLP:journals/corr/abs-2007-13019}.

\paragraph{Dueling bandits}
The ``dueling bandits'' framework 
studies a recommendation problem similar to ours in which multiple items are chosen in each trial and 
relative feedback is observed, representing agent selections
\citep{10.1145/1553374.1553527, YUE20121538, NEURIPS2020_d5fcc35c, https://doi.org/10.48550/arxiv.2101.01572}.
However, in contrast to our setting, these works consider preference models which are fixed {\it a priori}, and do not change as a function of item history.

\section{Preliminaries}
\label{sec:prelims}

Throughout, we use $\Delta(n)$ to denote the simplex over $n$ items, $\mathbf{u}_n$ for the uniform distribution on $n$ items, and $d_{TV}(v,v')$ for the total variation distance between distributions. We use the $\ell_2$ norm unless specified otherwise (e.g.\ as $\norm{x}_{1}$), and we use $B_{\epsilon}(x)$ to denote the ball of radius $\epsilon$ around $x$.

\subsection{Interaction Model}

We recall the setup from \cite{AgarwalB22} for the online recommendations problem for an agent with adaptive preferences. At any time, there is some {\it memory vector} $v \in \Delta(n)$, which expresses some function of the prior selections of the agent.
The {\it preference model} of an agent is a mapping $M : \Delta(n) \rightarrow [0,1]^n$ which assigns scores $M(v)_i = f_i(v)$ according to preference functions $f_i : \Delta(n) \rightarrow [0,1]$ for each item.   
An instance of the problem is specified by a universe of
$n$ items, a menu size $k < n$, a preference model $M$, a memory update rule $U$, and a sequence of reward vectors $r_1, \ldots, r_T$.
In each round $t \in [T]$:
\begin{itemize}
    \item the recommender chooses a menu $K_t$, consisting of $k$ distinct items from $[n]$, which is shown to the agent; 
    \item the agent selects one item $i_t \in K_t$, chosen at random according to the distribution given by:
    \begin{align*}
        p_t(i ; K_t, v_t) =&\; \frac{f_{i}(v_t)}{\sum_{j \in K_t} f_j(v_t) };
    \end{align*}
    \item the memory vector is updated to $v_{t+1} = U(v_t, i_t, t)$ by the update rule;
    \item the recommender receives reward $r_t(i_t) \in [0,1]$ for the chosen item.
\end{itemize}
The initial $v_1 \in \Delta(n)$ can be chosen arbitrarily. We assume each $f_i$ is unknown to the recommender, but $U$ is known. The goal of the recommender is to minimize regret over $T$ rounds with respect to some {\it target set} $S \subseteq \Delta(n)$. For any such $S$, the regret of an algorithm $\A$ with respect to $S$ is
\begin{align*}
    \Reg_S(\A; T) =&\;  \E\brackets{ \max_{x \in S}  \sum_{t=1}^T \langle r_t, x\rangle - r_t(i_t)}
\end{align*}
where $i_t$ is the agent's item choice at time $t$ resulting from $\A$, and where the expectation is taken over internal randomness of $\A$ as well as the agent's choices.

\subsection{Realizable Distributions}
For a preference model $M$ and memory vector $v$,
let $\IRD(v, M)$ denote the set of {\it instantaneously realizable distributions} for $v$, given by
\begin{align*}
    \IRD(v, M) =&\; \convhull \braces{  p(K, v) : K \in \brackets{ {n \choose k}}} 
\end{align*}
where $K$ is a $k$-item subset of $[n]$ and $p(K,v)$ denotes the item selection distribution of an agent with memory $v$ conditioned on being shown a menu $K$, which is given by
\begin{align*}
    p(i;K,v) =&\; \frac{ f_i(v) }{ \sum_{j\in K} f_j(v) }
\end{align*}
for each item $i$ in $K$ (and 0 otherwise), where $f_i$ is the preference scoring function for any item $i$. 
Note that this expresses the possible item choice distributions of an agent with memory $v$ resulting from all possible menu selection strategies by the recommender, as any distribution over menus yields a convex combination of item distributions $p(K,v)$.
The set of {\it everywhere instantaneously realizable distributions} is given by
\begin{align*}
    \EIRD(M) = \bigcap_{v\in\Delta(n)} \IRD(v, M).
\end{align*}
This is the target set considered by \cite{AgarwalB22}, and which we will consider for several of our results.
As a toy example, consider the case where preferences are fixed at $f_i(v) = 1$ for all $v$; here, every $\IRD$ set is equivalent to $\EIRD$, which is given by the convex hull of all vectors in $\Delta(n)$ with mass $1/k$ on exactly $k$ items. 
We note that there are several natural reasons to consider regret benchmarks in ``item space'' rather than ``menu space''; in addition to previously-shown linear regret lower bounds for the best fixed menu distribution, our rewards are determined by the items chosen by the agent rather than our recommendations, and choice distributions are not guaranteed to quickly stabilize even if we hold menu distributions fixed.
We will assume that scoring functions $f_i$ are in fact bounded in the range $[\lambda, 1]$ for some constant $\lambda > 0$ which captures exploration on behalf of the agent; in Section \ref{subsec:alg-biggamma-eird} we will assume $\lambda \geq {k}/{n}$, which ensures that $\EIRD$ is non-empty (and contains the uniform distribution), yet in Section \ref{subsec:alg-ss} we allow $\lambda$ to be arbitrarily small.  

\subsection{Discounted Memory Agents}
Throughout, 
we consider agents whose memory update rules are {\it$\gamma$-discounted}. 
\begin{definition}[Discounted Memory Updating]
Under the $\gamma$-discounted memory update rule $U_{\gamma}$, for some $\gamma \in [0,1]$, when an item $i_t$ is selected at round $t$, the memory vector $v_t$ is updated to $v^{t+1} = U_{\gamma}(v_t, i_t, t)$, with
\begin{align*}
    v_{t+1}(i) =&\; \frac{\sum_{s=1}^{t} \gamma^{t-s} \cdot \mathbf{1}(i = i_s)}{ \sum_{s=1}^{t} \gamma^{t-s}}.
\end{align*}
\end{definition}
Taking $\gamma = 1$ yields the  uniform memory update rule considered by \cite{AgarwalB22}.
As in many settings involving discount factors, we can think of values of $\gamma$ closer to 1 as corresponding to larger ``effective horizons'' for memory; for any $\gamma \leq 1 - o(1)$ we refer to the quantity $1/(1- \gamma)$ as simply the {\it effective memory horizon}. We assume throughout that $\gamma$ is known.

\subsection{Smooth Preference Models}

Many of our results consider preference models with the property that each scoring function 
is Lipschitz, in addition to being bounded above 0, which we refer to as {\it smooth} preference models. 
\begin{definition}[Smooth Preference Models]
A preference model $M$ is $(\lambda, L)$-smooth if each scoring function $f_i$ takes values in $[\lambda, 1]$ and
is $L$-Lipschitz over $\Delta(n)$ with respect to the $\ell_1$ norm.
\end{definition}

This property allows for quite a broad class of functions, and is satisfied by each of the classes in \cite{AgarwalB22} (e.g.\ low-degree polynomials) with appropriate parameters. 
Despite its generality, we show in Section \ref{sec:biggamma} that this assumption alone is sufficient to enable us to always maintain an accurate {\it local} approximation of the model, provided that the agent's memory vector does not change too quickly, by periodically implementing a query learning rountine. For convenience, we assume that preference scores are always normalized to have a constant sum $\sum_i f_i(v) = C$  for some $C$ and any $v$, yet this can be relaxed for each of our results up to $\poly(n,L)$ factors.

\section{Targeting $\EIRD$ for Agents with Long Memory Horizons}
\label{sec:biggamma}

We begin by considering cases where $\gamma = 1 - o(1)$, with an effective memory horizon of $\Omega(T^c)$ for some $c \in (0,1]$.
Here, memory vectors change slowly, and every point in $\Delta(n)$ is well-approximated {\it some} sequence of item selections.
In Section \ref{subsec:menutimes} 
we give a result on the structure of $\IRD$ sets, which enables efficient menu selection in our $o(T)$-regret algorithm for $\EIRD$ in \ref{subsec:alg-biggamma-eird}.

\subsection{Characterizing $\IRD$ via Menu Times}
\label{subsec:menutimes}
We introduce a notion of the {\it menu time} required by each item in order to induce a particular item distribution $x$,
which allows us to directly characterize $\IRD$ sets, as well as efficiently construct menu distributions at each round with a greedy approach, avoiding the exponential dependence on $k$ from the linear programming routine in \cite{AgarwalB22} which enumerates all ${n \choose k}$ menus. At a memory vector $v$, for a target item distribution $x$, the menu time for item $i$ is given by
\begin{align*}
    \mu_i =&\; \frac{k \cdot \frac{x_i}{f_i(v)}}{\sum_{j=1}^n \frac{x_j}{f_j(v)}}.
\end{align*}
Observe that these quantities always satisfy 
$\sum_i \mu_i = k$ for any $v$ and $x$. 
We show that a distribution $x$ can be realized from a memory vector $v$ if and only if $\max_i \mu_i \leq 1$.

\begin{lemma}\label{lemma:ird-menus}
An item distribution $x$ belongs to $\IRD(v, M)$ if and only if we have that the menu time $\mu_i$ for each item is at most $1$. If this condition holds, there is a $\poly(n)$ time algorithm \textup{\texttt{MenuDist}$(v, x, M)$} for constructing a menu distribution $z$ such that $\E_{K\sim z}[p(K,v)] = x$.
\end{lemma}

\begin{proofsketch}
    If $x \in \IRD(v, M)$, there exists some menu distribution $z$ which yields $x$; converting this menu distribution to $\mu_i$ values by ``crediting'' a menu in proportion with its mass and the inverse of the sum of its item scores results in a menu time vector satisfying $\sum_i \mu_i = k $ and $\max_i {\mu_i} \leq 1$.

    Given a menu time vector satisfying these conditions, we can construct such a distribution by greedily choosing a menu of the $k$ items with highest remaining menu time and ``charging'' their remaining menu times at the same rate, breaking ties for the $k$th highest by charging and including at fractional rates. The number of items tied for $k$th highest remaining $\mu_i$ increases by 1 at each stage, and the highest initial $k-1$ items (with $\mu_i \leq 1$) will be included non-fractionally until tied for $k$th highest.
    The mass of each added menu in our final distribution $z$ will be allocated proportionally to the sum of scores of items in the menu. This allows cancellation of the terms for sums of menu scores, resulting in a menu distribution where the selection probability of an item is proportional to its score $f_i(v)$ and the number of (fractional) stages in which it was added to the menu. As the latter number of stages in which an item is added to a menu is proportional to its menu time, and its menu time is proportional to $x_i / f_i(v)$, the induced item choice distribution is then proportional to $x_i$.
\end{proofsketch}
\texttt{MenuDist}$(v, x, M)$ directly implements this menu distribution construction, and is used by our algorithms in Section \ref{subsec:alg-biggamma-eird} and Section \ref{subsec:alg-ss}. Further details are given in Appendix \ref{sec:menutime-proof}.

\subsection{A Useful Algorithm for Adversarial Bandits}
\label{subsec:dbg}

Here, we introduce a new algorithm for adversarial bandit problems with a number of useful robustness properties, which serves as the centerpiece of our approach in Section \ref{subsec:alg-biggamma-eird}. Our algorithm, Deferred Bandit Gradient, can tolerate unobserved adversarial perturbations $\xi_t$ to the action distribution $x_t$ in each round (where $x_t$ is corrupted to $y_t = x_t + \xi_t$ prior to sampling), and can accommodate {\it contracting} decision sets (where the action distribution $x_t$ chosen in each round must lie in a set $\K_t$, with $\K_t \subseteq \K_{t-1}$). Both of these properties were identified as being useful in this setting by \cite{AgarwalB22}, as preference scoring estimates will inevitably have some imprecision, and we generally will not know the shape of the $\EIRD$ set in advance. For online optimization in general, the  contracting domains property appears challenging to obtain with algorithms resembling Follow the Regularized Leader (such as Hedge or EXP3), yet is much more straightforward with approaches resembling (projected) Online Gradient Descent. In contrast to \cite{AgarwalB22}, who extend the ``OGD-style'' FKM algorithm for bandit convex optimization which obtains $O(T^{3/4})$ regret (\cite{DBLP:journals/corr/cs-LG-0408007}), our algorithm operates directly in the adversarial bandit setting for linear losses over the simplex, and leverages linearity to decrease the variance in gradient estimates (which bottlenecks the regret of FKM) by ``deferring'' the contribution of each reward observation across several future rounds, enabling runtime improvements. 
\begin{algorithm}[h]
\caption{Deferred Bandit Gradient}
\begin{algorithmic}
\STATE Input: sequence of rewards $r_t$, perturbation vectors $\xi_1,\ldots , \xi_T$ where $\abs{\xi_{t, i}} \leq \frac{ \epsilon }{n} $ at each round $t$, and contracting convex decision sets $\K_1, \ldots \K_T$ where $\B_{\epsilon} \subseteq \K_T$ for a given $\epsilon$.
\STATE Set $x_1 = \mathbf{u}_n$, $H = \frac{n}{\epsilon}$
\FOR{$t = 1$ to $T$}
    \STATE Adversary perturbs distribution $ {x}_t$ to $y_t = {x}_t + \xi_t$
    \STATE Sample $i_t \sim y_t$, observe $i_t$ and reward $r_t(i_t)$ 
    \STATE Let $\tilde{r}_{t} = \frac{e_{i_t} }{H} \cdot \frac{r_{t,i_t} }{ {x}_{t,i_t}}$ and $\widetilde{\nabla}_t = \sum_{s = \max(t - H + 1, 1)}^{t} \frac{\tilde{r}_t }{H}$
    \STATE Let $\K_{t+1, \epsilon} =\{x \vert \mathbf{u}_n + \frac{1}{1 - \epsilon} (x - \mathbf{u}_n) \in \K_{t+1} \}$
    \STATE Update $x_{t+1} = \Pi_{\K_{t+1,\epsilon }}[x_t + \eta \widetilde{\nabla}_t]$ 
\ENDFOR
\end{algorithmic}
\end{algorithm}

\begin{theorem}\label{thm:DBG}
For a sequence of rewards $r_t, \ldots, r_T \in [0,1]^n$, contracting convex decision sets $\K_1, \ldots \K_T \subseteq \Delta(n)$ where $\mathbf{u}_n \in \K_T$, 
and perturbation vectors $\xi_1,\ldots , \xi_T$ satisfying $\abs{\xi_{t, i}} \leq \frac{ \epsilon }{n}$ 
for a given $\epsilon$ in each round $t$, 
Deferred Bandit Gradient obtains expected regret bounded by
\begin{align*}
    \max_{x^* \in \K_t} \sum_{t=1}^T r_t^{\top}x^* - \sum_{t=1}^T r_t^{\top}y_t \leq&\; 2 \eta n^2 T  + \frac{\sqrt{2}}{\eta} +  3\sqrt{n} \epsilon T + \frac{n}{2\epsilon} + \sum_{t=1}^T \sum_{i=1}^n \frac{\abs { \xi_{t,i} }}{x_{t,i}}. 
\end{align*}
\end{theorem}
We prove Theorem \ref{thm:DBG} in Appendix \ref{appendix:dbg}. Our analysis proceeds by tracking the regret of a variant of OGD which accommodates contraction over the expectations of the sequence of $\widetilde{\nabla}_t$ vectors, whose squared norms are small in expectation, and showing that this closely tracks both the regret obtained by our algorithm and the reward of the optimum $x^*$ in hindsight.  Note that if the constraint $\abs{\xi_{t, i}} \leq \frac{ \epsilon x_{t,i} }{n} $ is satisfied in each round, DBG can be calibrated to obtain regret $O(n\sqrt{T} + \epsilon\sqrt{n}T)$.

\subsection{Targeting EIRD}
\label{subsec:alg-biggamma-eird}

At a high level, our approach is to guide the agent to implicitly run Deferred Bandit Gradient on our behalf, over a contracting subset of $\Delta(n)$ which always contains $\EIRD$. 
Periodically, we pause in order to refresh our estimates of the agent's current preferences, wherein all items are shown to the agent sufficiently often in order to accurately estimate preference scores near the current memory vector. We leverage smoothness of preferences to determine accuracy bounds on our score estimates as $v_t$ updates.
Given a target item distribution $x_t$ for the agent to sample from (as selected by $\dbg$), we can invoke the \texttt{MenuDist} routine with our score estimates to construct a menu distribution $z_t$ which approximately induces a choice distribution of $x_t$ (whose error is represented by the perturbations $\xi_t$ for $\dbg$). 

As our approach relies on stability of memory vectors across rounds, our regret decays towards $\Theta(T)$ as the memory horizon vanishes relative to $T$; we discuss the challenges associated with short memory horizons further in Section \ref{sec:smallgamma}. For memory horizons of $T^c$ for any $c >0$ we obtain strictly sublinear regret, and we recover the $\tilde{O}(T^{3/4})$ rate from \cite{AgarwalB22} for uniform memory, now holding for any smooth preferences rather than only for specific parametric classes.

\begin{theorem}\label{thm:long-eird-alg}
For an agent with a $(\lambda, L)$-smooth preference model $M$ for $\lambda \geq k/n$, and $\gamma$-discounted memory for $\gamma \geq 1 - \frac{1}{T^c}$ and $c \in (0,1]$, Algorithm \ref{alg:long-eird} obtains regret bounded by
\begin{align*}
\max_{x^* \in \EIRD(M)} \sum_{t=1}^T r^{\top}_t x^* - \E\brackets{\sum_{t=1}^T r_t(i_t) }
    =&\; \tilde{O}\parens{(n/\lambda)^{3/2} L^{1/4} \cdot T^{1 - c/4} }.
\end{align*}
\end{theorem}
\begin{algorithm}
    \caption{(Targeting $\EIRD$ for Smooth Models).}\label{alg:long-eird}
    \begin{algorithmic}
        \STATE Let $c^* = \min(c, 3/4)$, $\epsilon = \tilde{O}(nL^{1/4} \lambda^{-3/2} T^{-c/4})$, $Q = \tilde{O}(\frac{n^2}{\lambda^4\epsilon^2})$, and $\eta = (nT)^{-1/2}$.
        \STATE Initialize $q = 0$, $v_0 = v^* = \mathbf{u}_n$,  $F_{i} = \frac{C}{n}$ for $i\in [n]$, $M^* = \{F_i\}$
        \STATE Initialize $\dbg$ for $\epsilon, \eta$.
        \WHILE{$t\leq T$}
            \IF{$t < T^{c^*}$}
            \STATE Show arbitrary menu $K_t$ to agent
            \ELSIF{$t \geq T^{c^*}$ and either $\norm{ v_t - v^* }_1 \geq \frac{\lambda\epsilon}{2nL}$ or $q = 0$}
            \FOR{$b \in \{0,\ldots ,\lceil \frac{n-1}{k-1} \rceil\}$}
            \STATE Show agent menu $K_b = \{1\} \cup \{b(k-1)+2,\ldots,(b+1)(k+1) + 1\}$ for $Q$ rounds
            \STATE Let $\hat{F}_i = \parens{ \text{\# times } i_t = i} / \parens{ \text{\# times }i_t = 1}$ within the $Q$ rounds, for $i \in K_b$
            \ENDFOR
            \STATE Set $F_i = \frac{C \cdot \hat{F}_i}{\sum_{j=1}^n \hat{F}_j}$ for each $i\in[n]$, $M^* = \{F_i\}$
            \STATE Set $v^* = v^t$, increment $q$ by $\lceil \frac{n-1}{k-1} \rceil \cdot Q$, set $\K_{t-q+1} = \K_{t - q} \cap \IRD(v^*, M^*)$ for $\dbg$
            \ELSE
            \STATE Get $x_{t-q}$ from $\dbg$
            \STATE Let $z_t = \texttt{MenuDist}(v_t, x_{t-q}, M^*)$, sample menu $K_t \sim z_t$
            \STATE Show $K_t$ to agent, update $\dbg$ with observed $i_t$ and $r_t(i_t)$
            \ENDIF 
            \STATE Set $v_{t+1} = U(v_t, i_t, t)$, for each round counted by $q$ if necessary
        \ENDWHILE
    \end{algorithmic}
\end{algorithm}

The complete proof of Theorem \ref{thm:long-eird-alg} is deferred to Appendix \ref{thm:long-eird-alg}. Building on the regret bound for $\dbg$ in Theorem \ref{thm:DBG}, the central challenges are to show that preference estimates $F_i$ are close enough to each $f_i(v_t)$ to enable accurate choice targeting via \texttt{MenuDist}, and that the number of non-$\dbg$ rounds spent updating $F_i$ (tracked by $q$) does not grow too quickly. While $\EIRD$ contains at least the uniform distribution (as implied by Lemma \ref{lemma:ird-menus} when $\lambda \geq \frac{k}{n}$), it may not be particularly large in general.
In Section \ref{sec:biggamma-pi}, we identify conditions under which an alternate algorithmic approach allows us to compete with a much larger set of item distributions than $\EIRD$.

\section{Beyond EIRD: Scale-Bounded Preferences and the Smoothed Simplex}\label{sec:biggamma-pi}
\label{subsec:pseudoinc}

One motivation given by \cite{AgarwalB22} for considering  $\EIRD$  is the difficulty of exploration under uniform memory, as the current memory cannot be repeatedly ``washed away'' without exponential blowup. However, considering discount factors of $\gamma  \leq 1 - o(1)$ introduces the possibility that we might be able efficiently explore the space of feasible vectors and compete against item distributions which lie outside of $\EIRD$, i.e.\ item distributions which are only feasible for a strict subset of all memory vectors.  We identify a structural property which enables this, wherein preference scoring function outputs cannot be too far multiplicatively from their item's weight in memory. We say that such functions are {\it scale-bounded}.

\begin{definition}[Scale-Bounded Functions] 
A preference scoring function $f_i : \Delta(n) \rightarrow [\frac{\lambda}{\sigma},1]$ is $(\sigma,\lambda)$-scale-bounded for $\sigma \geq 1$ and $\lambda > 0$ if
\begin{align*}
    \sigma^{-1}((1 - \lambda)v_i + \lambda) \leq&\; f_i(v) \leq \sigma ((1 - \lambda)v_i + \lambda).
\end{align*}
\end{definition}
We say that a preference model $M$ is scale-bounded if each $f_i$ is scale-bounded; in this case, the vector of scores $M(v)$ cannot stray too far from their values in $v$. 
When this property is satisfied, we can show (using the menu time approach from Lemma \ref{lemma:ird-menus}) that any point which is not too close to the boundary of $\Delta(n)$ is contained in its own $\IRD$ set.
This motivates a target set of all such points, 
which we term the {\it $\phi$-smoothed simplex}.
\begin{definition}[$\phi$-Smoothed Simplex]
For any $\phi \in [0,1]$, the $\phi$-smoothed simplex is the set given by $\Delta^{\phi}(n) = \{ (1 - \phi)x + \phi \mathbf{u}_n : x \in \Delta(n) \}$.
\end{definition}
Further, we show that a neighborhood around any such point is contained in $\Delta^{\phi}(n)$ as well; as preference scores cannot be too far from an item's current memory vector weight, the required menu time for any item in a distribution $x$ which is nearby $v$ cannot be too large.
\begin{lemma}\label{lemma:pseudoinc-ird}
Let $M$ be a $(\sigma, \lambda)$-scale-bounded 
preference model
with $\sigma \leq \sqrt{n/(2k)}$.
Then, $x \in \IRD(v, M)$
for any $x \in  B_{\lambda \phi}(v)  \cap \Delta^{\phi}(n)$
and any $v \in \Delta^{\phi}(n)$, provided that $\phi \geq {4 k\lambda \sigma^2}$. 
\end{lemma}
Here, we consider $\Delta^{\phi}(n)$ as a target set for regret minimization. 
We will no longer require explicit lower bounds on $\lambda$, and so we can take our regret benchmark to be approaching the entire simplex as $\lambda$ approaches 0 with an appropriate choice of $\phi$.
This presents a stark constrast with the $\EIRD$ benchmark, as the scale-bounded property now suggests that it may be possible to persuade the agent to pick the best item in nearly every round, rather than in $O(T/n)$ rounds (which may occur in Section \ref{subsec:alg-biggamma-eird}, e.g.\ if some $f_i(v) = \lambda = k/n$ at every $v \in \Delta(n)$).

\subsection{A No-Regret Algorithm for $\Delta^{\phi}(n)$}\label{subsec:alg-ss}

In contrast to Algorithm \ref{alg:long-eird}, where we considered each round as a step for a bandit optimization algorithm with interleaved learning stages, 
here we collapse multiple iterations of learning and targeting into a {\it single} step for Online Gradient Descent, run over $\Delta^{\phi}(n)$,
where the agent's entire memory vector is moved in a desired direction. 
While we can no longer instantaneously realize any distribution in our target set, the ability to induce any choice distribution in a nearby ball enables exploration throughout $\Delta^{\phi}(n)$ via the agent's memory vector.
Further, the scale-bounded condition tethers scores to memory weight, enabling reduced variance in estimating both rewards and preferences.
However, this also yields a decay as $c$ approaches 1 (in addition to 0), as memory does not update quickly enough to enable exploration. Theorem 1 from \cite{AgarwalB22} implies that this is necessary: an adversary may shift the reward distribution in later rounds when we can no longer significantly move the entire memory vector, necessitating linear regret.

\begin{theorem}[Scale-Bounded Discounted Regret Bound]\label{thm:long-ss-alg}
For any agent with a preference model $M$ which is $(\sigma, \lambda)$-scale-bounded and $(\frac{ \lambda}{ \sigma}, L)$-smooth with $\sigma \leq \sqrt{n/(2k)}$, and with $\gamma$-discounted memory for $\gamma = 1 - \frac{1}{T^c}$ for $c \in (0,1)$, Algorithm \ref{alg:long-ss} obtains 
regret
\begin{align*}
    \Reg_{\Delta^{\phi}(n)}(\A_2; T) =&\; \tilde{O}\parens{(n^{4}L \parens{ T^{1 - c/2} + T^{1/2 + c/2}  }}
\end{align*}
with respect to the $\phi$-smoothed simplex, for $\phi = {4k\lambda \sigma^2}$.
\end{theorem}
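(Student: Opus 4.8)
The plan is to re-use the epoch structure of Algorithm~\ref{alg:long-eird}, with two changes. First, the copy of $\rcfkm$ runs over the \emph{fixed} domain $\Delta^{\phi}(n)$ --- in fact over its $\delta$-interior, shrunk by the FKM exploration radius $\delta$ so that the perturbed iterate $x_s+\delta u_s$ still lies in $\Delta^{\phi}(n)$ --- rather than over a contracting family of supersets of $\EIRD$. Second, $\rcfkm$ advances by a \emph{single} gradient step per optimization epoch, so if there are $N$ epochs it runs for only $N$ of its own rounds. We begin with $\texttt{BurnIn}(\gamma)$ for $t_{\textup{burn}}=T^{c}$ rounds, whose role is both to drive $v_t$ into the interior of $\Delta^{\phi}(n)$ (playing near-uniform menus) and to bring the discount normalization $N_t=\sum_{s=1}^{t}\gamma^{t-s}$ to its steady-state magnitude $\Theta(T^{c})$; from then on the identity $v^{t+1}=v^{t}+(x^{t}-v^{t})/N_t$ shows that every memory update moves $v_t$ by $O(1/T^{c})$, and always as a convex pull toward the realized selection $x^{t}=e_{i_t}$. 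This slow, directed drift is the structural fact on which the rest relies.

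An optimization epoch proceeds as follows. Whenever the memory vector lies more than $\rho$ from the last query center $v^*$ (in particular at the start), we run $\texttt{Query}$ afresh around the current $v^*$; since the scoring functions are $L$-smooth (Lipschitz gradients), a local Taylor estimate from these queries gives a model hypothesis $M^{*}$ that is $\epsilon$-accurate throughout $B_{\rho}(v^*)$ with $\rho=\tilde{\Theta}(\sqrt{\epsilon/L})$ --- so an epoch may contain several such re-queries. The epoch otherwise plays the single FKM iterate $x_s+\delta u_s\in\Delta^{\phi}(n)$: at each round we recompute, via Lemma~\ref{lemma:ird-menus} applied to $M^{*}$, a sparse menu distribution whose induced item distribution $\hat{x}_t$ satisfies $\norm{\hat{x}_t-(x_s+\delta u_s)}_{\infty}\le\epsilon$. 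The epoch ends (and the next FKM step is taken) once it is long enough that the reward of $x_s+\delta u_s$ can be estimated accurately. The crux is that $x_s+\delta u_s$ stays feasible for the entire epoch: it lies in $\Delta^{\phi}(n)$, and because $\eta$ and $\delta$ are calibrated so that consecutive FKM iterates are close --- and each epoch is long enough that $v_t$ has drifted near the previous iterate --- we have $\norm{x_s+\delta u_s-v_t}\le\lambda\phi$ at the start of the epoch, a distance that only decreases as $v_t$ drifts toward the point being induced (up to $O(\sqrt{\ell}/T^{c})$ stochastic fluctuation over an epoch of length $\ell$). Lemma~\ref{lemma:pseudoinc-ird} then certifies $x_s+\delta u_s\in\IRD(v_t,M)$, and since $M^{*}\approx M$ the construction of Lemma~\ref{lemma:ird-menus} applies with $\hat{x}_t$ still $O(\epsilon)$-close to the target.

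For the regret bound, $\rcfkm$ is given, as the bandit loss observation for its linear loss $x\mapsto-\langle\bar r_s,x\rangle$ at epoch $s$ (where $\bar r_s=\frac{1}{\ell}\sum_{t\in I_s}r_t$), the empirical average reward $\frac{1}{\ell}\sum_{t\in I_s}r_t(e_{i_t})$ over the epoch's rounds $I_s$. Its expectation is $\frac{1}{\ell}\sum_{t\in I_s}\langle r_t,\hat{x}_t\rangle=\langle\bar r_s,x_s+\delta u_s\rangle\pm O(n\epsilon)$, and the reward residuals form a bounded martingale difference sequence, so an Azuma bound makes the average concentrate to its mean within $\tilde{O}(1/\sqrt{\ell})$; hence the loss seen by $\rcfkm$ is its intended linear loss perturbed by $O(n\epsilon+1/\sqrt{\ell})$ and the action it ``plays'' is perturbed by $O(\epsilon)$, both of which we keep within the $O(N^{-1/4})$ perturbation budget of Theorem~\ref{thm:rcfkm}. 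That theorem bounds $\max_{x\in\Delta^{\phi}_{-\delta}(n)}\sum_{s}\langle\bar r_s,x\rangle-\langle\bar r_s,x_s\rangle$ by $\tilde{O}(N^{3/4})$, and multiplying by the epoch length $\ell=\tilde{\Theta}(T/N)$ turns this into a $\tilde{O}(TN^{-1/4})$ bound on $\sum_t r_t(x^*)-r_t(\hat{x}_t)$ for the best fixed $x^*\in\Delta^{\phi}_{-\delta}(n)$. Adding the $O(\delta T)$ cost of the $\delta$-shrinkage, the $O(n\epsilon T)$ menu-approximation error, the $\tilde{O}(\sqrt{T})$ aggregate concentration slack, the $T^{c}$ burn-in cost, and the total query overhead $N\cdot t_{\textup{query}}$: balancing these by taking $N=\tilde{\Theta}(T^{c/2})$ (so $\ell=\tilde{\Theta}(T^{1-c/2})$) and $\epsilon,\delta=\tilde{\Theta}(T^{-c/8})$ --- which renders the query overhead lower-order --- yields $\Reg_{\Delta^{\phi}(n)}(\A_2;T)=\tilde{O}(T^{1-c/8}+T^{c})$, and running the internal FKM over a slightly-more-smoothed simplex absorbs, at lower-order cost, the $O((n-1)T^{-c/2})$-scale fluctuation of $v_t$ out of $\Delta^{\phi}(n)$.

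The main obstacle is precisely this joint calibration. The horizon $T^{c}$ fixes the drift rate of $v_t$, and the epoch length, the query radius $\rho$ and precision $\epsilon$, and the FKM parameters $\eta,\delta$ must be set so that simultaneously (i) $M^{*}$ remains $\epsilon$-accurate at every $v_t$ visited during the epoch (re-querying often enough), (ii) $x_s+\delta u_s$ stays within $B_{\lambda\phi}(v_t)$ at every round --- which also caps how far consecutive iterates may move, hence the step size, hence the regret --- and (iii) each epoch is nevertheless long enough for the averaged reward to meet the $\rcfkm$ perturbation budget. The technical heart is the drift analysis across epoch boundaries: in the regime where epochs are shorter than the effective memory horizon $T^{c}$, one must show that the lag $\norm{v_t-x_s}$ accumulated while $\rcfkm$'s iterates are still moving stays below $\lambda\phi$, so that every induced distribution is at once feasible (Lemma~\ref{lemma:pseudoinc-ird}) and faithfully realizable (Lemma~\ref{lemma:ird-menus}).
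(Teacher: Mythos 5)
Your proposal follows essentially the same skeleton as the paper's proof: fixed domain $\Delta^{\phi}(n)$ (shrunk for the $\rcfkm$ exploration radius), a query phase to build a local model $M^*$, one $\rcfkm$ gradient step per optimization epoch with a $t_{\text{step}}$-averaged reward, Lemma~\ref{lemma:pseudoinc-ird} used to certify that the target iterate lies in $\IRD(v_t,M)$, Lemma~\ref{lemma:ird-menus} to realize it, and the $\rcfkm$ perturbation budget absorbing the model and sampling error. The final rate $\tilde{O}(T^{1-c/8}+T^c)$ matches.

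The main departure is the calibration. The paper ties the optimization-stage length to the drift radius, taking $t_{\text{step}}=\rho T^c=T^{c/2}$ (so $N=T^{1-c/2}$ epochs, one query each); the $\rcfkm$ term is then $T^{3/4+c/8}\leq T^{1-c/8}$ and the binding terms are the query-fraction $(t_{\text{query}}/t_{\text{step}})T$ and the sampling-error term. You instead take long epochs $\ell=T^{1-c/2}$ (so $N=T^{c/2}$) and allow re-querying whenever $v_t$ exits $B_{\rho}(v^*)$, which makes the $\rcfkm$ term $T\cdot N^{-1/4}=T^{1-c/8}$ the bottleneck and pushes the query overhead to lower order. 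Both balancings work, and both approaches rely on the same (informally treated, in the paper as well) invariant that $\|x_s-v_t\|$ stays below $\lambda\phi$ across epoch boundaries because $v_t$ drifts toward $x_s$ faster than $\rcfkm$'s iterates move; your version articulates this more explicitly.

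Two small inaccuracies worth flagging. First, you assert $\rho=\Theta(\sqrt{\epsilon/L})$ via a ``local Taylor estimate,'' but the $\texttt{Query}$ routine only estimates the values $f_i(v^*)$, not gradients, so the error grows as $L\rho$ (this is what the paper uses), giving $\rho=\Theta(\epsilon/L)$; a first-order estimate would require a different, $n$-times more expensive query design. This does not change the final rate here because the query overhead remains lower order, but it is not what Algorithm~\ref{alg:long-ss} does. Second, you count the query overhead as $N\cdot t_{\text{query}}$ while admitting an epoch may contain several re-queries; the correct accounting is $N\cdot(\text{re-queries per epoch})\cdot t_{\text{query}}$, which works out to roughly $T^{1-5c/8}$ under your parameters and is still lower order, so again the conclusion survives, but the bookkeeping as written is incomplete.
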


\begin{algorithm}
    \caption{(Targeting $\Delta^{\phi}$ for Scale-Bounded Models).}\label{alg:long-ss}
    \begin{algorithmic}
        \STATE Let $\epsilon = \tilde{O}(n^4L \cdot \max( T^{-c/2}, T^{c/2 - 1/2} ))$, $S = \tilde{O}(n^{3/2} T^c )$,  $\eta = \tilde{O}(n^{-1/2}L \cdot T^{c/2 - 1/2})$.
        \STATE \texttt{--- burn-in ---}
        \FOR{$t = 1$ to $T^c$}
            \STATE Show agent menu $K = \{1,\ldots,k\}$
        \ENDFOR
        \WHILE{$\max_i \abs{v_{t,i} - \frac{1}{n}} \geq \frac{\epsilon}{4n^2L\sigma}$ }
            \STATE Show agent $k$ items with smallest $v_{t,i}$, choosing randomly among ties up to $T^{-c}$
        \ENDWHILE
        \STATE \texttt{--- initial learning ---}
        \FOR{$T^c$ rounds}
            \STATE Let $F_{t,i} = \sigma^{-1}((1 - \lambda)v_{t,i} + \lambda)$ if $v_{t,i} < \frac{1}{n}$, else $F_{t,i} = \sigma ((1 - \lambda)v_{t,i} + \lambda)$
            \STATE Let $z_t = \texttt{MenuDist}(v_t, \mathbf{u}_n,  \{F_{t,i}\})$, show agent $K_t \sim z_t$
        \ENDFOR
        \STATE Set $F_{i}=\sum_{t } \mathbf{1}(i_t=i) \cdot F_{t,i} / (\frac{1}{n} T^c)) \cdot C(\sum_{j=1}^n F_{t,j})^{-1}$, set $v^* = v_t$
        \STATE \texttt{--- optimization ---}
        \STATE Initialize $\ogd$ over $\Delta^{\phi}_{\epsilon}(n)$ for $T/S$ rounds with $\eta$, set  $x_1 = v^* := v_t$.
        \FOR{$s = 1$ to $T/S$}
            \STATE Receive $x_s$ from $\ogd$
            \FOR{$S$ rounds do}
            \IF{$\norm{v_t - v^*}_1 \geq \epsilon / (2nL)$}
            \STATE Let $\tilde{v} = v_t$
            \STATE Show agent $K_t \sim z = \texttt{MenuDist}(v_{t}, \tilde{v}, \{F_i\}) $ for $T^c / L^2$ rounds
            \STATE Set $F_i = \sum_{t} \mathbf{1}(i_t=i) \cdot F_i / ( \tilde{v}_i T^c /L^2 ) \cdot C(\sum_{j=1}^n F_{j})^{-1}$, set $v^* = v_t$
            \ENDIF
            \STATE Show agent $K_t \sim z_t = \texttt{MenuDist}(v^*, x_s, \{F_i\}) $
            \ENDFOR
            \STATE Set $\tilde{\nabla}_s = \sum_{h = t-S + 1}^t  e_{i_h} r_h(i_h) /( x_{h,i} S ) $, update $\ogd$
        \ENDFOR
    \end{algorithmic}
\end{algorithm}

\begin{proofsketch}
In the ``burn-in'' stage, our goal is simply to push the memory vector towards $\mathbf{u}_n$; by first saturating memory on only $k$ items for $T^c$ rounds, we are then able to push all low-memory items towards $\frac{1}{n}$ at near-uniform rates, as memory now moves slowly and the $k$ lowest values will remain close together by the scale-bounded condition, so no item can get ``stuck'' near 0 and we reach $\mathbf{u}_n$ in $\tilde{O}(T^c)$ rounds. In the ``initial learning'' stage, we are now promised that the uniform distribution is is within our $\IRD$ set, and we can force memory to remain there by assuming  pessimistic scores if $v_{t,i} < \frac{1}{n}$ and  optimistic scores otherwise. By comparing observed selection frequencies to those indicated by our assumed scores, we obtain unbiased estimators for the true $f_i$ values near $\mathbf{u}_n$. In the ``optimization'' stage, we batch $O(T^{c})$ rounds into ``steps'' for Online Gradient Descent large enough to maintain locally accurate $f_i$ estimates throughout, and alternate between progressing toward the chosen target and updating our scores, which further enables concentrated estimates of average reward vectors in each step and a regret bound akin to that for ``slowed down'' OGD.
\end{proofsketch}

We allow $\lambda$ to be arbitrarily small, and assume only that $T$ is large enough to yield $\lambda \geq T^{-c/4}\poly(n)$; our bound has no dependence on $\lambda$ or $\phi$ beyond this. Our optimal rate over $c$ is again $\tilde{O}(T^{3/4})$, yet this time occurring when $c=\frac{1}{2}$, balancing improved variance reduction in learning with the need to quickly explore in memory space. Full proofs for this section are contained in Appendix \ref{sec:proofs-ss}.

\section{On Hardness of Relaxing Benchmarks}\label{sec:hardness}

Here, we give a hardness result relating to the complexity of determining the optimal reward of a preference model for a fixed loss function, which we view as evidence that expanding to target sets beyond $\EIRD$ necessitates carefully tailored assumptions on preference models, such as the scale-bounded property we consider in Section \ref{sec:biggamma-pi}.

\begin{theorem}\label{thm:np-hardness}
Unless $\textup{\textsc{RP}} \supseteq \textup{\textsc{NP}}$, there is no polynomial time algorithm 
which takes as input a circuit representation of a preference model $M$ and linear reward function $r$, and approximates the reward $r(v)$ of the best distribution $v \in \IRD(v, M)$ contained in its own $\IRD$ set within a $O(1/n)$ factor.
\end{theorem}
We show that an instance of the ``Max Independent Set'' problem, which is \textup{\textsc{NP}}-hard to approximate, can be encoded in a  preference model for an agent, wherein optimizing reward corresponds to selecting any independent set. We suppose there is only 1 item which receives positive reward and will always be in the menu, and the objective corresponds to maximizing its score. Our construction operates by interpreting assignments of weight to items as proposals for possible independent sets, and then efficiently checks a graph for edges between the corresponding vertices; the score of item 1 is then proportional to the size of any valid independent set.

This suggests that the difficulty of optimization beyond $\EIRD$ stems not only from issues of learnability or adversarial losses, but rather the possible complexity of optimal strategies which can be encoded by a preference model. In general, it appears hopeless to attempt to compete with a distribution which is {\it not} in its own $\IRD$ set, and any point which {\it is} inside its $\IRD$ set is {\it stable} under long enough time horizons for smooth preference models (up to arbitrary approximation) provided it can be initially reached; as such, properties similar to Lemma \ref{lemma:pseudoinc-ird} appear necessary for identifying feasible targets. 

\section{Agents with Short Memory Horizons}
\label{sec:smallgamma}

When the discount factor of the agent is small enough that memory vectors may move rapidly, we lose the precision required by the algorithms in Section \ref{sec:biggamma} in order to implement queries, and in fact the feasible state space may more closely resemble a discrete grid, with memory vectors encoding the sequence of items chosen over an effective horizon which is constant with respect to $T$. Nonetheless, for scale-bounded models we give an algorithm which we call \textsc{EXP}-$\phi$, which obtains $o(T)$ regret with respect to $\Delta^{\phi}(n)$ for {\it any} value of $\gamma \in [0,1)$ under an assumption about the restricted adversarial nature of rewards. Here, we assume that rewards are stochastic rather than adversarial for windows of length $o(T)$, but distributions may change adversarially between each window; we require a slightly larger lower bound on $\phi$ (yet still  $O(\lambda)$).

\subsection{A No-Regret Algorithm for Scale-Bounded Models}\label{subsec:exp-phi}
The idea behind \textsc{EXP}-$\phi$ is to view each vertex of the smoothed simplex as an action for a multi-armed bandit problem, where each ``pull'' corresponds to several rounds. When we ``commit'' to playing an item in the menu for a sufficiently long time, while otherwise playing items with the smallest weight in memory, the scale-bounded property will ensure that the selection frequency of that item gravitates towards its vertex in the smoothed simplex. Further, as we are no longer attempting to learn the preference model explicitly, we can relax the smoothness requirement.

\begin{theorem}
\label{thm:short-ss-alg}
For any agent with a preference model $M$ which is $(\sigma, \lambda)$-scale-bounded for which each $f_i(v) \in [\lambda, 1]$ for $\lambda \geq \frac{\sigma^2 k}{n}$ and $\sigma \leq \sqrt{n/(2k)}$, and with $\gamma$-discounted memory for $\gamma \in [0,1)$, when losses are drawn from a distribution which changes at most once every $t_{\text{hold}} = \tilde{O}\parens{ \frac{T^{2/3}}{1-\gamma} }$ rounds, Algorithm \ref{alg:long-ss} obtains 
regret at most
\begin{align*}
    \Reg_{\Delta^{\phi}(n)}(\A_3 ; T) =&\; \tilde{O}(T^{5/6})
\end{align*}
with respect to $\Delta^{\phi}(n)$, for $\phi = 3 \lambda k^3 \sigma^6$.  
\end{theorem}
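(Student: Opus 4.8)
The plan is to reduce the task to a finite-armed adversarial bandit over the $n$ vertices $b^1_\phi,\dots,b^n_\phi$ of $\Delta^\phi(n)$. Since each $r_t$ is linear, $\max_{x\in\Delta^\phi(n)}\sum_t r_t(x)=\max_{i\in[n]}\sum_t r_t(b^i_\phi)$, so it suffices to compete with the best single vertex in hindsight. After a short burn-in, $\A_3$ splits the remaining rounds into $N=T/\ell$ epochs of length $\ell$ and runs an \textsc{Exp3}-type master over the $n$ ``super-arms''; pulling arm $i$ invokes a \emph{commit subroutine} which, for the whole epoch, induces the item distribution $b^i_\phi$ whenever it is instantaneously realizable and otherwise plays a menu pairing item $i$ with the items of smallest current memory weight (driving the agent's selections toward $i$). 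The realized cumulative reward of the epoch, which lies in $[0,\ell]$, is fed back to the master as that arm's payoff.

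The central task is the commit-subroutine analysis: committing to $i$ for an epoch should yield cumulative reward at least $\sum_{t\in\textup{epoch}}r_t(b^i_\phi)$ minus a transient term and a concentration term. The engine is the pseudo-increasing property. Its two-sided bound $f_i(v)\in[((1-\lambda)v_i+\lambda)/\sigma,\;\sigma((1-\lambda)v_i+\lambda)]$ gives, via a menu-time computation of the type behind Lemma~\ref{lemma:pseudoinc-ird}, that $b^i_\phi\in\IRD(v_t,M)$ as long as the memory weight $v_{t,i}$ on the committed item is bounded below by a quantity determined by $\phi,\lambda,\sigma,k$; the hypothesis $\phi\ge 2\lambda k^3\sigma^6$ is precisely what makes this hold for every memory state a commit-to-$i$ epoch actually visits. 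The same bound yields a geometric rate at which $v_{t,i}$ climbs into the committed regime, so ``settling'' costs only $\tilde O(1/(1-\gamma))$ rounds per epoch, plus a correction for the agent's selection randomness (a martingale whose discounted increments $\gamma^{t-s}$ have squares summing to $O(1/(1-\gamma))$, so Azuma applies). Once settled, for $\gamma$ not too small the memory --- a discounted average of selections near $b^i_\phi$ --- stays in the region where $b^i_\phi$ is realizable, so the selection distribution literally equals $b^i_\phi$ in every remaining round. The delicate case is small $\gamma$, where feasible memory vectors form an essentially discrete grid and $v_t$ never lies in $\Delta^\phi(n)$: there one argues instead that the \emph{empirical selection frequency} over the epoch converges to $b^i_\phi$, by treating the commit strategy as a Markov chain on memory states and using the pseudo-increasing inequalities to bound its mixing time and the frequency and duration of excursions away from the high-$v_i$ states (the $2\lambda k^3\sigma^6$ bound on $\phi$ controlling how fast such excursions are undone). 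Making this uniform over $\gamma\in[0,1)$ and over the adversarially inherited memory state at the start of each epoch is the main obstacle.

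Given the commit-subroutine guarantee, the regret splits into: (i) the \textsc{Exp3} master's regret against the best fixed arm over $N$ epochs with payoffs in $[0,\ell]$, namely $\tilde O(\ell\sqrt{Nn})=\tilde O(\sqrt{nT\ell})$; (ii) the transient/settling cost charged to the benchmark arm, $\tilde O(N/(1-\gamma))=\tilde O(T/((1-\gamma)\ell))$ (including the excursion cost from the previous step); and (iii) $\tilde O(\sqrt T)$ for relating realized rewards to their conditional means, via Hoeffding over the within-window i.i.d.\ rewards and Azuma over the selection randomness. The restricted-adversary assumption does the real work here: since the reward distribution is fixed over each length-$t_{\textup{hold}}$ window, the transient portion of each epoch can be \emph{amortized} --- an unrestricted adversary could load all reward into transients, or into memory states from which $b^i_\phi$ is unreachable --- and the returned epoch payoff is a faithful estimate of $\langle\bar\rho_w,b^i_\phi\rangle$ for the window's mean reward vector $\bar\rho_w$. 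Summing (i)--(iii) and taking $\ell=\tilde\Theta(T^{2/3}/(1-\gamma))$, which forces the stated requirement $t_{\textup{hold}}=\tilde\Omega(T^{2/3}/(1-\gamma))$ so that rewards are constant over a full epoch, the \textsc{Exp3} term $\tilde O(\sqrt n\,T^{5/6})$ dominates and we get $\Reg_{\Delta^\phi(n)}(\A_3;T)=\tilde O(T^{5/6})$. As with the earlier theorems the exponents are not optimized; a smaller $\ell$ trades piece (i) against piece (ii).
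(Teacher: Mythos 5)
Your high-level plan matches the paper's: treat the $n$ vertices of $\Delta^\phi(n)$ as arms, run \textsc{Exp3} over epochs, and per epoch commit to an item $i^*$ by pinning it in the menu alongside the $k-1$ lowest-weight items, with the pseudo-increasing bounds driving $v_{t,i^*}$ up. The final bookkeeping (\textsc{Exp3} regret scaled by epoch length, plus per-epoch transient and concentration costs, balanced at $\ell=\tilde\Theta(T^{2/3}/(1-\gamma))$) is also essentially the paper's, though the paper phrases it through a single accuracy parameter $\alpha=T^{-1/6}$ with $t_{\text{hold}}=O(1/(\alpha^4(1-\gamma)))$, and the bias term it actually carries is $\alpha T=T^{5/6}$ (i.e.\ an $O(\alpha)$ per-round gap between the settled frequency and $b^{i}_\phi$, not merely a one-time transient plus a $\sqrt T$ martingale term as in your (ii)+(iii)); the exponents happen to coincide because the \textsc{Exp3} and bias terms balance there.

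The genuine gap is the one you flag yourself: the small-$\gamma$ case. Your proposed route --- view the commit dynamics as a Markov chain on feasible memory states, bound mixing time and excursion frequency --- is not developed, and making it uniform over $\gamma\in[0,1)$ and over the arbitrary inherited memory state at epoch start is exactly the hard part. The paper avoids this entirely with a concavity/Jensen argument: the selection probability of $i^*$ from the commit menu, viewed as a function of $v_{i^*}$, is concave, so $\E[\Pr(i^*\text{ selected})\mid E_t]$ is minimized (for fixed $E_t=\E[v_{t,i^*}]$) when $v_{t,i^*}$ is supported on $\{0,1\}$. Bounding those two extremal selection probabilities $P_0,P_1$ below via the pseudo-increasing inequality gives a scalar recursion $E_{t+1}\ge(1-\gamma)(E_tP_1+(1-E_t)P_0)+\gamma E_t$ whose fixed point is $E^*\ge 1-2\sigma^6k^3\lambda\ge 1-\phi$, and whose convergence to within $O(\alpha)$ takes $O(1/(\alpha(1-\gamma)))$ rounds. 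This works verbatim for all $\gamma\in[0,1)$ --- including $\gamma=0$, where the extremal case is realized exactly --- with no mixing-time argument. That is the missing idea in your sketch. A secondary divergence: you propose sometimes inducing $b^i_\phi$ directly via the $\IRD$ construction when feasible; the paper's algorithm never does this, always plays the fixed commit menu, and relies on the recursion above, so Lemma~\ref{lemma:pseudoinc-ird} is not actually invoked in this proof.
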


\begin{algorithm}
    \caption{(\textsc{EXP}-$\phi$).}\label{alg:short-ss}
    \begin{algorithmic}[0]
        \STATE Initialize $\textsc{EXP3}$ to run for $T/t_{\text{hold}}$ steps %
        \WHILE{$t < T$}
            \STATE Sample arm $i^*$ from $\textsc{EXP3}$
            \FOR{$\tilde{O}(T^{2/3}/(1 - \gamma))$ rounds}
                \STATE Let $K_t = \{i^*\} + \text{argmin}^{k-1}_{j \neq i} v_j$ 
            \ENDFOR
            \STATE Update $\textsc{EXP3}$ with average reward of $i^*$
        \ENDWHILE
    \end{algorithmic}
\end{algorithm}

\subsection{Barriers for General Models}

If we cannot assume that preferences are scale-bounded, then it appears difficult to compete even against $\EIRD$ for arbitrary smooth models. We show a regret lower bound with respect to $\EIRD$ for any algorithm over a quasipolynomial time horizon by constructing preference models in which the optimal strategy depends delicately on the current memory vector, and which simultaneously induces fast exploration over a discrete state space.

\begin{theorem}\label{thm:short-eird-hard}
For any $\gamma \in (0,1/2)$, there is a set of 
$(\lambda, L)$-
smooth preference models $\mathcal{M}$ with $\lambda = O(1/n)$ and $L = \poly(n)$
such that any algorithm must have expected regret $\Omega(T)$ for any $T \in O(n^{\log (n)})$
when the preference model $M$ is sampled uniformly from $\mathcal{M}$. 
\end{theorem}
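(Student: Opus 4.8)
The plan is to prove an average-case lower bound directly: I will construct a family $\mathcal{M}$ of $(\lambda,L)$-smooth preference models with $\lambda = O(1/n)$ and $L = \poly(n)$, together with an accompanying (obliviously chosen) reward sequence, so that when $M$ is drawn uniformly from $\mathcal{M}$ every algorithm has $\mathbb{E}[\Reg_{\EIRD}] = \Omega(T)$ for all $T = O(n^{\log n})$; since the theorem is already phrased for a uniformly random $M$, this is exactly what is needed. The construction rests on two pillars: a \emph{discretization} of the small-$\gamma$ memory dynamics into a quasipolynomially large, well-separated grid of states, and a \emph{hidden combination} planted in the scoring functions that makes the only memory region from which one can profitably stabilize cost $n^{\Theta(\log n)}$ exploration to locate.

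For the discretization, fix $\gamma \in (0,1/2)$ and let $h = \Theta(\log_{1/\gamma} n)$ be large enough that $\gamma^{h} \le n^{-3}$. I would first show that, after a short burn-in, $v_t$ depends (up to total variation $n^{-\Omega(1)}$) only on the last $h$ selected items, and that distinct length-$h$ selection histories yield memory vectors that are $\Omega(\gamma^{h}) = n^{-O(1)}$ apart; hence a $\poly(n)$-Lipschitz, constant-sum scoring function can realize essentially arbitrary bounded values on the resulting set of $N := n^{\Theta(\log n)}$ discrete states, by interpolating off the grid. This turns the dynamics into a controlled walk on a size-$N$ state graph: from a ``spiky'' state whose scores concentrate on one dominant item $i_0$ (value $\approx 1$, all others $\approx \lambda$), including $i_0$ in the menu moves deterministically (up to $o(1)$) to a designated successor, whereas excluding it forces a near-uniform, essentially uncontrolled selection among the recommender's chosen non-dominant items.

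Each $M \in \mathcal{M}$ is then obtained from this common spiky template by planting a hidden \emph{key} $\kappa$: a length-$m$ sequence of designated dominant items ($m = \Theta(\log n)$), drawn uniformly among $n^{\Theta(\log n)}$ possibilities, whose exact realization from the start state leads to a ``treasure'' region $R^\star$ of memory vectors. I would arrange, using the menu-time characterization of Lemma~\ref{lemma:ird-menus}, that (i) $\EIRD(M)$ is rich enough to contain, for the reward sequence used, points whose cumulative reward exceeds by $\Omega(T)$ anything realizable ``en route''; (ii) the only memory vectors that are both holdable (near-fixed under some realizable selection distribution, i.e.\ lying in their own $\IRD$ set) and favorable against the adversary lie in $R^\star$, so that a recommender ignorant of $\kappa$ cannot park its memory anywhere useful; and (iii) away from $R^\star$, the rewards vary (obliviously, but coordinated with $M$) so that tracking the benchmark requires, in each round, knowing $f$ at the \emph{current} memory vector — which, because memory advances by $\Omega(1)$ every round and no state is revisited more than $O(1)$ times within $T = O(n^{\log n})$ rounds, the algorithm has no opportunity to learn. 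Crucially the template is engineered so that ``off-key'' moves leave the agent in states that are \emph{indistinguishable} (in responses and rewards) from on-key-but-incomplete states: all non-treasure states look alike, so a failed guess of $\kappa$ refutes only that one guess rather than revealing a correct prefix. An information-theoretic argument then shows that, conditioned on not yet having reached $R^\star$, the first-$t$-round observations are (nearly) independent of which not-yet-refuted key is true; hence $R^\star$ is reached with probability $O(t/(mN))$ by round $t$, which is $o(1)$ throughout a window of length $\Theta(m N) = n^{\Theta(\log n)}$, and per-round regret is $\Omega(1)$ until then. Taking $T = O(n^{\log n})$ inside this window gives $\mathbb{E}[\Reg_{\EIRD}] = \Omega(T)$.

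The main obstacle is engineering the spiky template so that all of (i)--(iii) hold simultaneously under the tight budget $\lambda = O(1/n)$, $L = \poly(n)$, constant-sum normalization: I need the ``holdable and favorable'' memory set to genuinely be hidden behind the key (otherwise the algorithm parks somewhere, learns $f$ locally at leisure, and runs a no-regret algorithm over $\EIRD$), the non-treasure states to be genuinely mutually indistinguishable (otherwise the search collapses from $n^{\Theta(\log n)}$ alternatives to a polynomially-searchable sequential lock), and yet $\EIRD(M)$ to remain large enough that the benchmark beats the algorithm's reward by a constant per round. Balancing this ``small holdable set versus large $\EIRD$ value'' tension is the delicate part; the discretization and the final counting argument are comparatively routine.
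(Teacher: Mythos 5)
Your proposal shares the paper's high-level premise (for small $\gamma$ the feasible memory vectors form a well-separated, quasipolynomially large grid indexed by recent selection history, so the scoring functions can be defined independently per state and then $\poly(n)$-Lipschitz interpolated) but then diverges onto a substantially more complicated route that has real gaps.

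The central mismatch is conceptual: you want the regret to come from failing to locate a hidden ``treasure'' region $R^\star$ of memory states, reached only by executing a secret key sequence $\kappa$, and you want the only ``holdable and favorable'' states to lie in $R^\star$. But the benchmark is $\EIRD(M)$, whose points are by definition \emph{instantaneously realizable from every memory vector}. There is no privileged region of memory from which the best $\EIRD$ point becomes achievable; the only obstruction is the algorithm's ignorance of $M$ at the current $v_t$. Your construction therefore has to carry two separate burdens — hiding a region \emph{and} hiding the local scoring function — when only the latter is needed, and the former sits uneasily with the benchmark's definition. The paper's construction makes exactly this leaner choice: fix rewards once and for all (item $1$ has reward $1$, everything else $0$), plant at each of the $\Theta\!\bigl((n/2)^{h}\bigr)$ states $s$ an \emph{independent} uniformly random subset $G_s$ of $k-2$ items, and set $f_1(s)=\lambda$, $f_i(s)=\lambda$ for $i\in G_s$, $f_i(s)=1$ otherwise (with $k=n/2$, $\lambda=\tfrac{1}{n-k+1}$). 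The unique optimal menu at state $s$ is $\{1\}\cup G_s\cup\{\text{anything}\}$, it induces the uniform distribution (hence lies in $\EIRD$ and earns $1/n$), any other menu earns at most $\tfrac{3}{4n}$, and — because playing optimally induces uniform selection — the walk keeps hitting previously unvisited states, at which the algorithm has literally zero information about $G_s$. No hidden key, no indistinguishability lemma, no coordinated adversarial rewards.

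That indistinguishability lemma is where the genuine gap sits in your write-up. You acknowledge yourself that it is delicate to make all non-treasure states ``look alike'' so that the search is not a polynomially solvable sequential lock, while simultaneously honoring constant-sum normalization, $f_i \ge \lambda$, $\poly(n)$-Lipschitzness, and keeping $\EIRD$ large and valuable. You do not give a construction satisfying these constraints, nor an information-theoretic argument beyond a sketch; the ``$R^\star$ reached with probability $O(t/(mN))$'' claim needs the algorithm to have negligible steering power over the agent's item selections even while the construction is spiky enough to define a deterministic-ish key sequence, and these pull in opposite directions. The paper sidesteps all of this: because each $G_s$ is drawn independently, visiting a new state gives no information whatsoever about unvisited states, so the lower bound reduces to counting new states against the total $\Theta((n/2)^h)=n^{\Theta(\log n)}$. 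If you want to salvage your route you would need to (a) drop the ``only favorable region'' framing, which contradicts the $\EIRD$ semantics, and (b) replace the key/indistinguishability machinery with independent per-state randomness, at which point you have essentially recovered the paper's argument.

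One smaller point: your rewards are ``coordinated with $M$.'' The theorem is stronger with oblivious rewards, and the paper uses a single fixed reward vector. Allowing rewards to depend on the sampled model weakens the statement and is unnecessary.
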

Our approach is to observe that every feasible memory vector encodes a unique truncated history of length $O(\log n)$, resulting in an implicit state space of size $O(n^{\log (n)})$. We design preference models in which the optimal policy is implementable by inducing the uniform distribution at each round, which lies inside $\EIRD$, yet requires identifying a specific set of alternate items to place in the menu deterministically at each state to maximize the selection probability of item 1. We show that any competitive strategy also necessarily explores many states with high probability, and so any algorithm will frequently reach states where it cannot identify the optimal menu distribution, which is defined on a per-state basis by a random process.

\bibliography{ref}

\begin{thebibliography}{28}
\providecommand{\natexlab}[1]{#1}
\providecommand{\url}[1]{\texttt{#1}}
\expandafter\ifx\csname urlstyle\endcsname\relax
  \providecommand{\doi}[1]{doi: #1}\else
  \providecommand{\doi}{doi: \begingroup \urlstyle{rm}\Url}\fi

\bibitem[Abdollahpouri et~al.(2019)Abdollahpouri, Burke, and Mobasher]{abdollahpouri2019managing}
Himan Abdollahpouri, Robin Burke, and Bamshad Mobasher.
\newblock Managing popularity bias in recommender systems with personalized re-ranking, 2019.

\bibitem[Agarwal and Brown(2022)]{AgarwalB22}
Arpit Agarwal and William Brown.
\newblock Diversified recommendations for agents with adaptive preferences.
\newblock In \emph{NeurIPS}, 2022.

\bibitem[Agarwal et~al.(2020)Agarwal, Johnson, and Agarwal]{NEURIPS2020_d5fcc35c}
Arpit Agarwal, Nicholas Johnson, and Shivani Agarwal.
\newblock Choice bandits.
\newblock In H.~Larochelle, M.~Ranzato, R.~Hadsell, M.F. Balcan, and H.~Lin, editors, \emph{Advances in Neural Information Processing Systems}, volume~33, pages 18399--18410. Curran Associates, Inc., 2020.
\newblock URL \url{https://proceedings.neurips.cc/paper/2020/file/d5fcc35c94879a4afad61cacca56192c-Paper.pdf}.

\bibitem[Arora and Barak(2009)]{10.5555/1540612}
Sanjeev Arora and Boaz Barak.
\newblock \emph{Computational Complexity: A Modern Approach}.
\newblock Cambridge University Press, USA, 1st edition, 2009.
\newblock ISBN 0521424267.

\bibitem[Awasthi et~al.(2022)Awasthi, Bhatia, Gollapudi, and Kollias]{AwasthiBGK22}
Pranjal Awasthi, Kush Bhatia, Sreenivas Gollapudi, and Kostas Kollias.
\newblock Congested bandits: Optimal routing via short-term resets.
\newblock In Kamalika Chaudhuri, Stefanie Jegelka, Le~Song, Csaba Szepesv{\'{a}}ri, Gang Niu, and Sivan Sabato, editors, \emph{International Conference on Machine Learning, {ICML} 2022, 17-23 July 2022, Baltimore, Maryland, {USA}}, volume 162 of \emph{Proceedings of Machine Learning Research}, pages 1078--1100. {PMLR}, 2022.

\bibitem[Chen et~al.(2019)Chen, Beutel, Covington, Jain, Belletti, and Chi]{ChenBCJBC19}
Minmin Chen, Alex Beutel, Paul Covington, Sagar Jain, Francois Belletti, and Ed~H. Chi.
\newblock Top-k off-policy correction for a {REINFORCE} recommender system.
\newblock In J.~Shane Culpepper, Alistair Moffat, Paul~N. Bennett, and Kristina Lerman, editors, \emph{Proceedings of the Twelfth {ACM} International Conference on Web Search and Data Mining, {WSDM} 2019, Melbourne, VIC, Australia, February 11-15, 2019}, pages 456--464. {ACM}, 2019.
\newblock \doi{10.1145/3289600.3290999}.
\newblock URL \url{https://doi.org/10.1145/3289600.3290999}.

\bibitem[Curmei et~al.(2022)Curmei, Haupt, Recht, and Hadfield{-}Menell]{CurmeiHRH22}
Mihaela Curmei, Andreas~A. Haupt, Benjamin Recht, and Dylan Hadfield{-}Menell.
\newblock Towards psychologically-grounded dynamic preference models.
\newblock In Jennifer Golbeck, F.~Maxwell Harper, Vanessa Murdock, Michael~D. Ekstrand, Bracha Shapira, Justin Basilico, Keld~T. Lundgaard, and Even Oldridge, editors, \emph{RecSys '22: Sixteenth {ACM} Conference on Recommender Systems, Seattle, WA, USA, September 18 - 23, 2022}, pages 35--48. {ACM}, 2022.

\bibitem[Dean and Morgenstern(2022)]{DeanM22}
Sarah Dean and Jamie Morgenstern.
\newblock Preference dynamics under personalized recommendations.
\newblock In David~M. Pennock, Ilya Segal, and Sven Seuken, editors, \emph{{EC} '22: The 23rd {ACM} Conference on Economics and Computation, Boulder, CO, USA, July 11 - 15, 2022}, pages 795--816. {ACM}, 2022.

\bibitem[Flaxman et~al.(2004)Flaxman, Kalai, and McMahan]{DBLP:journals/corr/cs-LG-0408007}
Abraham Flaxman, Adam~Tauman Kalai, and H.~Brendan McMahan.
\newblock Online convex optimization in the bandit setting: gradient descent without a gradient.
\newblock \emph{CoRR}, cs.LG/0408007, 2004.

\bibitem[Flaxman et~al.(2016)Flaxman, Goel, and Rao]{flaxman}
Seth Flaxman, Sharad Goel, and Justin~M. Rao.
\newblock {Filter Bubbles, Echo Chambers, and Online News Consumption}.
\newblock \emph{Public Opinion Quarterly}, 80\penalty0 (S1):\penalty0 298--320, 03 2016.
\newblock ISSN 0033-362X.
\newblock \doi{10.1093/poq/nfw006}.
\newblock URL \url{https://doi.org/10.1093/poq/nfw006}.

\bibitem[Gaitonde et~al.(2021)Gaitonde, Kleinberg, and Tardos]{GaitondeKT21}
Jason Gaitonde, Jon~M. Kleinberg, and {\'{E}}va Tardos.
\newblock Polarization in geometric opinion dynamics.
\newblock In P{\'{e}}ter Bir{\'{o}}, Shuchi Chawla, and Federico Echenique, editors, \emph{{EC} '21: The 22nd {ACM} Conference on Economics and Computation, Budapest, Hungary, July 18-23, 2021}, pages 499--519. {ACM}, 2021.

\bibitem[Gittins(1979)]{gittins1979bandit}
John~C Gittins.
\newblock Bandit processes and dynamic allocation indices.
\newblock \emph{Journal of the Royal Statistical Society: Series B (Methodological)}, 41\penalty0 (2):\penalty0 148--164, 1979.

\bibitem[Hazla et~al.(2019)Hazla, Jin, Mossel, and Ramnarayan]{Hazla+19}
Jan Hazla, Yan Jin, Elchanan Mossel, and Govind Ramnarayan.
\newblock A geometric model of opinion polarization.
\newblock \emph{CoRR}, abs/1910.05274, 2019.

\bibitem[Heidari et~al.(2016)Heidari, Kearns, and Roth]{HeidariKR16}
Hoda Heidari, Michael~J. Kearns, and Aaron Roth.
\newblock Tight policy regret bounds for improving and decaying bandits.
\newblock In Subbarao Kambhampati, editor, \emph{Proceedings of the Twenty-Fifth International Joint Conference on Artificial Intelligence, {IJCAI} 2016, New York, NY, USA, 9-15 July 2016}, pages 1562--1570. {IJCAI/AAAI} Press, 2016.
\newblock URL \url{http://www.ijcai.org/Abstract/16/224}.

\bibitem[Ie et~al.(2019)Ie, Jain, Wang, Narvekar, Agarwal, Wu, Cheng, Chandra, and Boutilier]{IeJWNAWCCB19}
Eugene Ie, Vihan Jain, Jing Wang, Sanmit Narvekar, Ritesh Agarwal, Rui Wu, Heng{-}Tze Cheng, Tushar Chandra, and Craig Boutilier.
\newblock Slateq: {A} tractable decomposition for reinforcement learning with recommendation sets.
\newblock In Sarit Kraus, editor, \emph{Proceedings of the Twenty-Eighth International Joint Conference on Artificial Intelligence, {IJCAI} 2019, Macao, China, August 10-16, 2019}, pages 2592--2599. ijcai.org, 2019.
\newblock \doi{10.24963/ijcai.2019/360}.
\newblock URL \url{https://doi.org/10.24963/ijcai.2019/360}.

\bibitem[Jagadeesan et~al.(2022)Jagadeesan, Garg, and Steinhardt]{jagadeesan2022supplyside}
Meena Jagadeesan, Nikhil Garg, and Jacob Steinhardt.
\newblock Supply-side equilibria in recommender systems, 2022.

\bibitem[Kleinberg and Immorlica(2018)]{KleinbergI18}
Robert Kleinberg and Nicole Immorlica.
\newblock Recharging bandits.
\newblock In Mikkel Thorup, editor, \emph{59th {IEEE} Annual Symposium on Foundations of Computer Science, {FOCS} 2018, Paris, France, October 7-9, 2018}, pages 309--319. {IEEE} Computer Society, 2018.
\newblock \doi{10.1109/FOCS.2018.00037}.
\newblock URL \url{https://doi.org/10.1109/FOCS.2018.00037}.

\bibitem[Laforgue et~al.(2022)Laforgue, Clerici, Cesa-Bianchi, and Gilad-Bachrach]{Laforgue+22}
Pierre Laforgue, Giulia Clerici, Nicol\`o Cesa-Bianchi, and Ran Gilad-Bachrach.
\newblock A last switch dependent analysis of satiation and seasonality in bandits.
\newblock In Gustau Camps-Valls, Francisco J.~R. Ruiz, and Isabel Valera, editors, \emph{Proceedings of The 25th International Conference on Artificial Intelligence and Statistics}, volume 151 of \emph{Proceedings of Machine Learning Research}, pages 971--990. PMLR, 28--30 Mar 2022.

\bibitem[Leqi et~al.(2021)Leqi, Kilin{\c{c}}{-}Karzan, Lipton, and Montgomery]{LeqiKLM21}
Liu Leqi, Fatma Kilin{\c{c}}{-}Karzan, Zachary~C. Lipton, and Alan~L. Montgomery.
\newblock Rebounding bandits for modeling satiation effects.
\newblock In Marc'Aurelio Ranzato, Alina Beygelzimer, Yann~N. Dauphin, Percy Liang, and Jennifer~Wortman Vaughan, editors, \emph{Advances in Neural Information Processing Systems 34: Annual Conference on Neural Information Processing Systems 2021, NeurIPS 2021, December 6-14, 2021, virtual}, pages 4003--4014, 2021.

\bibitem[Levine et~al.(2017)Levine, Crammer, and Mannor]{LevineCM17}
Nir Levine, Koby Crammer, and Shie Mannor.
\newblock Rotting bandits.
\newblock In Isabelle Guyon, Ulrike von Luxburg, Samy Bengio, Hanna~M. Wallach, Rob Fergus, S.~V.~N. Vishwanathan, and Roman Garnett, editors, \emph{Advances in Neural Information Processing Systems 30: Annual Conference on Neural Information Processing Systems 2017, December 4-9, 2017, Long Beach, CA, {USA}}, pages 3074--3083, 2017.

\bibitem[Mansoury et~al.(2020)Mansoury, Abdollahpouri, Pechenizkiy, Mobasher, and Burke]{DBLP:journals/corr/abs-2007-13019}
Masoud Mansoury, Himan Abdollahpouri, Mykola Pechenizkiy, Bamshad Mobasher, and Robin Burke.
\newblock Feedback loop and bias amplification in recommender systems.
\newblock \emph{CoRR}, abs/2007.13019, 2020.
\newblock URL \url{https://arxiv.org/abs/2007.13019}.

\bibitem[Papadigenopoulos et~al.(2022)Papadigenopoulos, Caramanis, and Shakkottai]{Papadigenopoulos+22}
Orestis Papadigenopoulos, Constantine Caramanis, and Sanjay Shakkottai.
\newblock Non-stationary bandits under recharging payoffs: Improved planning with sublinear regret.
\newblock \emph{CoRR}, abs/2205.14790, 2022.
\newblock \doi{10.48550/arXiv.2205.14790}.
\newblock URL \url{https://doi.org/10.48550/arXiv.2205.14790}.

\bibitem[Rangi et~al.(2021)Rangi, Franceschetti, and Tran-Thanh]{https://doi.org/10.48550/arxiv.2101.01572}
Anshuka Rangi, Massimo Franceschetti, and Long Tran-Thanh.
\newblock Sequential choice bandits with feedback for personalizing users' experience, 2021.
\newblock URL \url{https://arxiv.org/abs/2101.01572}.

\bibitem[Shah et~al.(2018)Shah, Blanchet, and Johari]{DBLP:journals/corr/abs-1802-05693}
Virag Shah, Jose~H. Blanchet, and Ramesh Johari.
\newblock Bandit learning with positive externalities.
\newblock \emph{CoRR}, abs/1802.05693, 2018.
\newblock URL \url{http://arxiv.org/abs/1802.05693}.

\bibitem[Yue and Joachims(2009)]{10.1145/1553374.1553527}
Yisong Yue and Thorsten Joachims.
\newblock Interactively optimizing information retrieval systems as a dueling bandits problem.
\newblock In \emph{Proceedings of the 26th Annual International Conference on Machine Learning}, ICML '09, page 1201–1208, New York, NY, USA, 2009. Association for Computing Machinery.
\newblock ISBN 9781605585161.
\newblock \doi{10.1145/1553374.1553527}.

\bibitem[Yue et~al.(2012)Yue, Broder, Kleinberg, and Joachims]{YUE20121538}
Yisong Yue, Josef Broder, Robert Kleinberg, and Thorsten Joachims.
\newblock The k-armed dueling bandits problem.
\newblock \emph{Journal of Computer and System Sciences}, 78\penalty0 (5):\penalty0 1538--1556, 2012.
\newblock ISSN 0022-0000.
\newblock \doi{https://doi.org/10.1016/j.jcss.2011.12.028}.
\newblock URL \url{https://www.sciencedirect.com/science/article/pii/S0022000012000281}.
\newblock JCSS Special Issue: Cloud Computing 2011.

\bibitem[Zhan et~al.(2021)Zhan, Christakopoulou, Le, Ooi, Mladenov, Beutel, Boutilier, Chi, and Chen]{ZhanCLOMBBCC21}
Ruohan Zhan, Konstantina Christakopoulou, Ya~Le, Jayden Ooi, Martin Mladenov, Alex Beutel, Craig Boutilier, Ed~H. Chi, and Minmin Chen.
\newblock Towards content provider aware recommender systems: {A} simulation study on the interplay between user and provider utilities.
\newblock In Jure Leskovec, Marko Grobelnik, Marc Najork, Jie Tang, and Leila Zia, editors, \emph{{WWW} '21: The Web Conference 2021, Virtual Event / Ljubljana, Slovenia, April 19-23, 2021}, pages 3872--3883. {ACM} / {IW3C2}, 2021.
\newblock \doi{10.1145/3442381.3449889}.
\newblock URL \url{https://doi.org/10.1145/3442381.3449889}.

\bibitem[Zhou et~al.(2021)Zhou, Liu, Dong, and Deng]{DBLP:journals/corr/abs-2105-08869}
Tianchen Zhou, Jia Liu, Chaosheng Dong, and Jingyuan Deng.
\newblock Incentivized bandit learning with self-reinforcing user preferences.
\newblock \emph{CoRR}, abs/2105.08869, 2021.
\newblock URL \url{https://arxiv.org/abs/2105.08869}.

\end{thebibliography}

\clearpage
\appendix
\section{Analysis for Deferred Bandit Gradient}
\label{appendix:dbg}
We first show that the analysis of vanilla Online Gradient Descent extends directly to adversarially contracting domains, where our chosen action $x_t$ must lie in the observed set $\K_t$ in each round.
\begin{algorithm}
\caption{Contracting Online Gradient Descent.}\label{alg:ogd-contracting}
\begin{algorithmic}
\STATE Input: sequence of contracting convex decision sets $\K_1, \ldots \K_T$, $x_1 \in \K_1$, step size $\eta$
\STATE Set $x_1 = \mathbf{0}$
\FOR{$t = 1$ to $T$}
    \STATE Play $x_t$ and observe cost $\ell_t(x_t)$
    \STATE Update and project: \begin{align*}
        y_{t+1} =&\; x_t - \eta \nabla \ell_t(x_t) \\
        x_{t+1} =&\; \Pi_{\K_{t+1}}(y_{t+1})
    \end{align*}
\ENDFOR
\end{algorithmic}
\end{algorithm}

\begin{lemma} \label{lemma:ogd-contracting}
For a sequence of contracting convex decision sets $\K_1, \ldots \K_T$, $x_1 \in \K_1$ each with diameter at most $D$, a sequence of $G$-Lipschitz losses $\ell_1,\ldots, \ell_T$, and parameter $\eta$,
the regret of Algorithm \ref{alg:ogd-contracting} with respect to $\K_t$ is bounded by
\begin{align*}
\sum_{t=1}^T \ell_t(x_t) - \min_{x^* \in \K_T }\sum_{t=1}^T \ell_t(x^*) \leq&\;  \frac{D^2}{2\eta} + \frac{\eta }{2} \sum_{t=1}^T \norm{\nabla_t}^2 \leq~ GD\sqrt{T}
\end{align*}
when $\eta = \frac{D}{G\sqrt{T}}$.
\end{lemma}

\begin{proof}
Let $x^* = \text{arg min}_{x \in \K_T} \sum_{t=1}^T \ell_t(x)$, and let $\nabla_t = \nabla \ell_t(x_t)$. First, note that
\begin{align*}
    \ell_t(x_t) - \ell_t(x^*) \leq&\; \nabla_t^{\top} (x_t - x^*)
\end{align*}
by convexity; we can then upper-bound each point's distance from $x^*$ by:
\begin{align*}
    \norm{x_{t+1} - x^*} =&\; \norm{ \Pi_{\K_{t+1}}(x_t - \eta \nabla \ell_t(x_t)) - x^*}  \leq \norm{x_t - \eta \nabla_t - x^*},
\end{align*}
as $x^* \in \K_{t+1} \supseteq \K_T$ . Then we have
\begin{align*}
    \norm{x_{t+1} - x^*}^2 \leq&\; \norm{x_{t} - x^*}^2 + \eta^2 \norm{ \nabla_t }^2 - 2\eta \nabla_t^{\top} (x_t - x^*)
\end{align*} 
and 
\begin{align*}
    \nabla_t^{\top} (x_t - x^*) \leq&\; \frac{\norm{x_t - x^*}^2 - \norm{x_{t+1} - x^*}^2 }{2 \eta} + \frac{\eta \norm{\nabla_t}^2 }{2}.
\end{align*}
We can then conclude:
\begin{align*}
    \sum_{t=1}^T \ell_t(x_t) - \sum_{t=1}^T \ell_t(x^*) \leq&\; \sum_{t=1}^T \nabla_t^{\top} (x_t - x^*) \\ 
    \leq&\; \sum_{t=1}^T \frac{\norm{x_t - x^*}^2 - \norm{x_{t+1} - x^*}^2 }{2 \eta} + \frac{\eta  }{2} \sum_{t=1}^T \norm{\nabla_t}^2 \\ 
    \leq&\; \frac{\norm{x_T - x^*}^2}{2\eta} + \frac{\eta }{2} \sum_{t=1}^T \norm{\nabla_t}^2  \\
    \leq&\; \frac{D^2}{2\eta} + \frac{\eta }{2} \sum_{t=1}^T \norm{\nabla_t}^2 \\
    =&\; GD\sqrt{T}.  \tag{ $\eta = \frac{D}{G\sqrt{T}}$}
\end{align*}
\end{proof}

\begin{proof} Equipped with the previous result, we can now prove the regret bound for Theorem \ref{thm:DBG}.
Let  ${r}^*_t = \sum_{s = \max(t - H + 1, 1)}^{t} \frac{ r_{s} \otimes (y_s \oslash x_s) }{H}$, where $\otimes$ and  $\oslash$ denote elementwise multiplication and division, respectively, and let $\hat{r}_t = \sum_{s = \max(t - H + 1, 1)}^{t} \frac{ r_{s} }{H}$. Further, let $x^*_{\epsilon} = \Pi_{\K_T, \epsilon}[x^*]$.
Observe that the following hold for every $t$:
\begin{align}
 \frac{r_{t,i} \cdot y_{t, i}}{x_{t, i}} =&\; r_{t,i} \parens{ 1 + \frac{\xi_{t,i}}{x_{t,i}} };  \notag \\
 \E[r^*_t  - \hat{r}_t ] =&\; \frac{1}{H}\sum_{s=t-H+1}^t ((r_s \otimes \xi_s) \oslash x_s) ; \label{eq:seq-step-deviation}  \\
    \E\brackets{ \widetilde{\nabla}_t } =&\; r^*_t. \label{eq:exp-seq-match}  
\end{align}
Observe that by the constraints on each $\K_{t, \epsilon}$, each $x_t$ can be expressed as $(1 - \epsilon)x + \epsilon \mathbf{u}_n$ for some $x \in \K{t} \subseteq \Delta(n)$, and so we will always have $x_{t,i} \geq \frac{\epsilon}{n}$.
To bound the squared norms of $\widetilde{\nabla}_t$ in order apply Lemma \ref{alg:ogd-contracting}, consider the maximizing case where $x_t = \frac{\epsilon}{n}$ and $y_t = \frac{2\epsilon}{n}$ in all but one element, and where $r_{t,i} = 1$ for all rewards; $\E\brackets{\norm{\widetilde{\nabla}_t}^2}$ is increasing whenever probability mass in $x_t$ is transferred from an arm $x_{t,i}$ to $x_{t,j} > x_{t,i}$, and thus we can obtain a bound in terms the expectation of a squared binomial random variable $X$ with $H = \frac{n}{\epsilon}$ trials, where each trial has value at most $1$ with probability $\frac{2\epsilon(n-1)}{n}$ (if any of the $n-1$ are sampled), and value $\frac{1}{H}$ otherwise. This yields:
\begin{align}
    \E \brackets{\norm{\widetilde{\nabla}_t}^2} \leq&\; H\parens{ \frac{2\epsilon(n-1)}{n} }\parens{1 - \frac{2\epsilon(n-1)}{n}} + \parens{\frac{2(H - 1)\epsilon(n-1)}{n} + 1 }^2 \notag \\   
    \leq&\; 2(n-1) +  (2n-1 )^2 \notag \\ 
    \leq&\; 4n^2. \label{eq:grad-sq-norm}
\end{align}
Over all $T$,  for any fixed $x^* \in \K_T$ we have:
\begin{align}
    \sum_{t=1}^T r_t^{\top}x^* - \hat{r}_t^{\top}x^* \leq&\;  \frac{H}{2} = \frac{n}{2 \epsilon}, \label{eq:seq-deviation}
\end{align}
as only fractional rewards from the last $H$ rounds are omitted from being counted appropriately in $ \sum_t \hat{r}_t$. We now analyze the regret of our algorithm with respect to the sequence $\{ \hat{r}_t \}$. For $x^* \in \K_T$ we have:
\begin{align}
  \sum_{t=1}^T \hat{r}_t^{\top} x^* - \sum_{t=1}^T \EE{\hat{r}_t^{\top}y_t}  \leq&\; \sum_{t=1}^T \hat{r}_t^{\top} x^*_{\epsilon} - \sum_{t=1}^T \EE{\hat{r}_t^{\top} {x}_t} + \sqrt{n}\epsilon  T  \tag{each $r_t$ is $\frac{\sqrt{n}}{2}$-Lipschitz} \\
  \leq&\; \sum_{t=1}^T {r^*_t}^{\top} x^*_{\epsilon}   - \sum_{t=1}^T {r^*_t}^{\top} {x}_t   + \sqrt{n} \epsilon T + \sum_{t=1}^T \sum_{i=1}^n \frac{\abs { \xi_{t,i} }}{x_{t,i}}  \tag{by \eqref{eq:seq-step-deviation}} \\
  \leq&\; \E\brackets{ \Reg_{COGD}(\widetilde{\nabla}_{1},\ldots, \widetilde{\nabla}_{T }) } + \sqrt{n} \epsilon T  + \sum_{t=1}^T \sum_{i=1}^n \frac{\abs { \xi_{t,i} }}{x_{t,i}}  \label{eq:cogd-martin} \\
  \leq&\; \frac{\eta}{2} \sum_{t=1}^T \E \brackets{ \norm{\widetilde{\nabla}_{t }}^2 }  + \frac{\sqrt{2}}{\eta} +  \sqrt{n} \epsilon T  + \sum_{t=1}^T \sum_{i=1}^n \frac{\abs { \xi_{t,i} }}{x_{t,i}} \tag{by Lemma \ref{lemma:ogd-contracting}}.
\end{align}
Line \eqref{eq:cogd-martin} holds by observing that our algorithm is equivalent to running Contracting Online Gradient Descent to the sequence $\{\widetilde{\nabla}_t\}$, where $\E \brackets{ \sum_{t=1}\widetilde{\nabla}_t } =   \sum_{t=1}r^*_t$ by \eqref{eq:exp-seq-match}.
As such, by \eqref{eq:grad-sq-norm} and \eqref{eq:seq-deviation} we have that
\begin{align*}
      \sum_{t=1}^T {r}_t^{\top} x^* - \sum_{t=1}^T \EE{{r}_t^{\top}y_t}  \leq&\;  2 \eta n^2 T  + \frac{\sqrt{2}}{\eta} +  3\sqrt{n} \epsilon T + \frac{n}{2\epsilon} + \sum_{t=1}^T \sum_{i=1}^n \frac{\abs { \xi_{t,i} }}{x_{t,i}}.  
\end{align*}

\end{proof}

\section{Omitted Proofs for Section 3}

\subsection{Proof of Lemma \ref{lemma:ird-menus}}
\label{sec:menutime-proof}

\begin{proof}
Let the menu time $\mu_i$ for each item be given by
    \begin{align*}
    \mu_i :=&\;  \frac{k \cdot \frac{x_i}{f_i(v)} }{  \sum_{j=1}^n  \frac{x_j}{f_j(v)} }.
\end{align*}
It is straightforward to see that $\sum_i \mu_i = k$. Intuitively, menu time corresponds roughly to the relative frequency with which an item must be included in the menu, where an item with $\mu_i=1$ must always be included in the menu; the amount of menu time ``charged'' for a menu is inversely proportional to the sum of item scores in the menu, which enables an ``apples to apples'' comparison between resulting selection probabilities. 

We first show that any $x \in \IRD(v, M)$ results in $\mu_i$ at most 1 per item.
    For any $x \in \IRD(v, M)$, consider an arbitrary convex combination of the menu-conditional item distributions given by items' scores $f_i(v)$, with the probability of each menu given by $p_K$. 
Allocate ``menu time units'' $\mu_K$ to each menu $K$ in proportion with $p_K / \sum_{i\in K} f_i(v)$, such that $\sum_K \mu_K = k$, and further let $\mu_{K,i} = \mu_K / k$ for each $i \in K$.  
    Observe that selection probability of an item $i$ is given by
    \begin{align*}
        x_i =&\; \sum_{K : i \in K} p_K \cdot \frac{f_i(v)}{\sum_{j \in K} f_j(v)} \\
        =&\; \frac{1}{Z} \sum_{K : i \in K} \frac{\mu_K}{k} \cdot {f_i(v)}\\
        =&\; \frac{f_i(v)}{Z} \sum_{K : i \in K} {\mu_{K,i}} \\
    \end{align*}
where $Z$ is a normalizing constant such that $\sum_{K} \mu_K = k$, and so we have that $\sum_{K} \mu_{K,i} \leq 1$ as each $\mu_K$ is positive. 
Further, we have that
\begin{align*}
    \sum_{K : i \in K} {\mu_{K,i}} =&\; Z \cdot \frac{x_i}{f_i(v)} \\
    =&\; \frac{x_i}{f_i(v)} \cdot \frac{k}{\sum_{j=1}^n \frac{x_j}{f_j(v)} }
\end{align*}
upon solving for $Z$ such that $\sum_{K} \mu_K = k$, which gives us that
\begin{align*}
    \sum_{K : i \in K} {\mu_{K,i}} =&\; \mu_i
\end{align*}
and yields  $\mu_i \leq 1$ for each item.

Next, we describe an algorithm \textup{\texttt{MenuDist}$(v, x, M)$} for constructing a menu distribution $z$ which yields $\E_{K\sim z}[p(K,v)] = x$ for any $x$ and $v$ satisfying $\max_i \mu_i \leq 1$, constructively showing that $x \in \IRD(v, M)$.
Given $x$, $v$, and $M$ which satisfy $\max_i \mu_i \leq 1$, we construct a menu distribution iteratively over $H \leq n$ {\it stages}. Let $\mu_i^1 = \mu_i$ be the initial remaining menu time for each item. At each stage $h \geq 1$ we will decrement the remaining time to $\mu_i^{h+1} \leq \mu_i^h$ for each item and track its change $\Delta^h_i = \mu^h_i - \mu^{h+1}_i$, with $\Delta^h = \sum_{i} \Delta_i^h$.  Further, with $\mu_j^h$ as the $k$th highest remaining value, let $k^h_{+} < k$ be the number of items $i$ with $\mu_i^h > \mu_j^h$, and $k^h_{*} \geq 1$ be the number of items $i$ with $\mu_i^h = \mu_j^h$.
At each stage, we maintain the following invariants:
\begin{enumerate}
    \item $k^{h+1}_{*} \geq k^h_{*} + 1$ if $k^h_{*} < n$;
    \item $\Delta_i^h = \frac{\Delta^h}{k}$ if $\mu_i^h > \mu_j^h$ (where $\mu_j^h$ is the $k$th highest value);
    \item $\Delta_i^h = \frac{(k - k^h_{+}) \Delta^h}{k^h_{*} \cdot k}$ if $\mu_i^h = \mu_j^h$;
    \item $\Delta_i^h = 0$ if $\mu_i^h < \mu_j^h$; and
    \item $\Delta_i^h = \mu_i^h$ if $k^h_{*} = n$.
\end{enumerate}
Intuitively, we are decreasing $\mu_i$ of each item with strictly larger $\mu_i^h$ than the $k$th highest at identical rates, corresponding to inclusion in the menu at that stage with probability $1$, and breaking ties among the $k$th highest with fractional inclusion in the remaining spots. We decrease until the number of such tied items increases by at least 1 (or deplete all remaining menu time if all items are tied), and then continue to the next stage. Observe that when $\max_i \mu_i \leq 1$, this results in $\mu^{H+1}_i = 0$ for all items; if an item has $\mu_i^h > \mu_j^h$  (only occurring when $k^h_{+} > 0$), its remaining menu time decreases at a $1/k$ fraction of the total rate of decrease, and no $\mu_i^h$ will ever drop below 0 as we break ties among the highest remaining, and the total amount of depleted menu time is at most $k$. Once $k^h_{+} = 0$ we only ever decrease $\mu_i^h$ for items tied for the highest remaining value, and thus successfully satisfy $\mu_i^{H+1} = 0$ while maintaining our invariants.

We now show how to construct a menu distribution $z$ from the quantities $\Delta_i^h$ which yields $p(z,v) = \E_{K\sim z}[p(K,v)] = x$. 
For a stage $h$, we construct a (multi)set of menus $S^h = \{K^h_s : s \in [\abs{S^h}] \}$ with size $\abs{S_h} = k^h_*$, where we can assume each $K^h_s \in S^h$ is distinct without loss of generality (e.g.\ by marking duplicates with some auxiliary notation).
Let $J^h = \{ i : \mu_i^h = \mu_j^h \}$ be the set of items tied for $k$th highest remaining; construct $S^h$ iteratively over $k^h_*$ steps by adding a menu $K_s^h$ which includes all items with $\mu_i^h > \mu_j^h$, and $k - k_{+}^h$ items from $J^h$ which belong to the fewest menus in $S^h$ thus far, breaking ties arbitrarily. There are a total of $(k - k_{+}^h)k^h_*$ inclusions of some item in $J^h$ to some menu in $S^h$, which results in each item $i \in  J^h$ being included in exactly $k - k_{+}^h$ menus in $S^h$ (as $(k - k_{+}^h)k^h_*$ is divisible by $k - k_{+}^h$), and the uniform distribution over $S^h$ then satisfies
\begin{align*}
    \Pr_{K^h_s \sim \Unif(S^h)} \brackets{i \in K^h_s} =&\; \begin{cases}
        1 & \mu_i^h > \mu_j^h \\ 
        \frac{k - k_{+}^h}{k^h_*} & \mu_i^h = \mu_j^h \\
        0 & \mu_i^h < \mu_j^h
    \end{cases}.
\end{align*}
Let the menu distribution $z^h$ be the distribution over $S^h$ given by 
\begin{align*}
    \Pr_{K \sim z_h }\brackets{ K = K^h_s } =&\; \frac{\sum_{i \in K} f_i(v)}{\sum_{s \in \abs{S^h}} \sum_{j \in K_s^h} f_j(v)},
\end{align*}
which yields 
\begin{align*}
    \Pr_{K \sim z^h}\brackets{ \text{Agent chooses i} } =&\; \sum_{K \in S^h : i \in K} \frac{f_i(v)}{\sum_{q \in K} f_{q}(v)} \cdot \frac{\sum_{q \in K} f_q(v)}{\sum_{s \in \abs{S^h}} \sum_{j \in K_s^h} f_j(v)} \\ 
    =&\; \frac{k \Delta_i^h}{\Delta^h} \cdot  \frac{f_i(v)}{\sum_{s \in \abs{S^h}} \sum_{j \in K_s^h} f_j(v)}  \\
    \overset{\Delta}{=}&\; \frac{k \Delta_i^h}{\Delta^h} \cdot  \frac{f_i(v)}{Z^h} 
\end{align*}
as the probability of an agent choosing $i$ conditional on being shown  a menu sampled from $z^h$.
Defining  $Z = \sum_{h=1}^H \frac{Z^h \Delta^h}{k}$,
let $z$ be the mixture over distributions $z^h$ with mass $\frac{Z^h \Delta^h}{Zk}$ on each. Sampling a menu $K \sim z$ yields
\begin{align*}
    \Pr_{K \sim z}\brackets{\text{Agent chooses i} } =&\; \sum_{h=1}^H \frac{Z^h \Delta^h}{Z k} \cdot \Pr_{K \sim z^h}\brackets{ \text{Agent chooses i} } \\ 
    =&\;\sum_{h=1}^H \frac{Z^h \Delta^h}{Z k} \cdot \frac{k \Delta_i^h}{\Delta^h} \cdot  \frac{f_i(v)}{Z^h}  \\ 
    =&\; f_i(v) \cdot \frac{\mu_i}{Z}  \tag{$\sum_{h} \Delta_i^h = \mu_i$} \\
    =&\; x_i \tag{$\mu_i \propto x_i / f_i(v)$},
\end{align*}
where  
$Z = \sum_{j=1}^n k \frac{x_j}{f_j(v)}$ then holds by proportionality as $x$ and $p(z, v)$ are both probability distributions over $[n]$. By construction, it follows that $x \in \IRD(v, M)$.

The algorithm \texttt{MenuDist}$(v, x, M)$ which implements this construction can be run in $\poly(n)$ time, as the quantities $\Delta_i^h$ are computed over $H \leq n$ rounds each requiring $O(n)$ computation (after an initial sort of descending $\mu_i$ values), and each set $S^h$ contains $k^h_* \leq n$ menus, each of which can be constructed in $O(k)$ time (by adding elements from $J^h$ in a round-robin fashion) while maintaining the quantities necessary to compute the final normalizing constants. 
Sampling can be implemented efficiently as well, e.g.\ by sampling from a uniform distribution and thresholding appropriately.
\end{proof}

\subsection{Effective Memory Horizons}
Here we give results for bounding the change in memory as rounds progress.

\begin{lemma}[Bounding Memory Drift]\label{lemma:drift-bound-1}
For any $\gamma \in (0,1)$, $g \in (0,\gamma]$,  $t \geq 1$, and $w \geq 1$ such that $g^w \geq 1 - 2\beta$ and $g^{t+w-1} \leq \frac{1}{2}$, we have that
$d_{TV}(v_t, v_{t+w})$ is at most $\beta$.
\end{lemma}
\begin{proof}
We can express the memory vector $v_{t+w}$ as
\begin{align*}
    v_{t+w} =&\; \frac{ \sum_{s=t}^{t+w-1} \gamma^{t+w-s-1} \cdot i_s + \sum_{s=1}^{t-1} \gamma^{t+w-s-1} \cdot i_s }{\sum_{s=1}^{t+w-1} \gamma^{t+w-s-1} } \\
    =&\; \frac{\sum_{s=t}^{t+w-1} \gamma^{t+w-s-1} \cdot i_s}{\sum_{s=1}^{t+w-1} \gamma^{t+w-s-1} } + v_{t} \cdot \parens{1 - \frac{\sum_{s=t}^{t+w-1} \gamma^{t+w-s-1}}{\sum_{s=1}^{t+w-1} \gamma^{t+w-s-1}} } 
\end{align*}
which then yields
\begin{align}
    d_{TV}(v_t, v_{t+w}) \leq&\; \frac{\sum_{s=0}^{w-1} \gamma^{s} }{\sum_{s=0}^{t+w-2} \gamma^{s} }  \nonumber \\
    =&\; \frac{1 - \gamma^w}{1 - \gamma^{t+w-1}} \nonumber \\
    \leq&\; \frac{1 - g^w}{1 - g^{t+w-1}} \label{eq:window-g-gamma-grad} \\ 
    \leq&\; \beta. \nonumber
\end{align}
Step \ref{eq:window-g-gamma-grad} follows from the fact that
\begin{align*}
    \frac{\partial}{\partial \gamma} \parens{ \frac{1 - \gamma^w}{1 - \gamma^{t+w-1}} } =&\; \frac{-w \gamma^{w-1}(1 - \gamma^{t+w-1})  + (1 - \gamma^w) (t+w-1)\gamma^{t+w-2}   }{(1 - \gamma^{t+w-1})^2} \\
    =&\; \frac{ \gamma^{w-1} \parens{  (t+w-1)\gamma^{t-1} - (t - 1)  \gamma^{t+w-1}  - w  } }{(1 - \gamma^{t+w-1})^2} \\
    \leq&\; 0,
\end{align*}
as the function  $(t+w-1)\gamma^{t-1} - (t - 1)  \gamma^{t+w-1}  - w$ is increasing over $\gamma \in [0,1]$ from $-w$ to $0$ (and thus the derivative for (\ref{eq:window-g-gamma-grad}) is negative at any $\gamma \in (0,1)$).
\end{proof}

We can use this to obtain an upper limit on $w$ in terms of $c$ such that this bound holds.
\begin{lemma}\label{lemma:drift-bound-2}
    For $\gamma \geq 1 - 1/T^c$, $t \geq T^c$, and $w \leq \beta \cdot T^c$, we have that $d_{TV}(v_t, v_{t+w}) \leq \beta$.
\end{lemma}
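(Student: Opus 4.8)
The plan is to reduce the statement to the preceding memory-drift lemma (Lemma~\ref{lemma:drift-bound-1}) by checking that its two hypotheses hold for the concrete parameters $\gamma = 1 - 1/T^c$, $t \geq T^c$, and $w \leq \beta T^c$. That lemma guarantees $d_{TV}(v_t, v_{t+w}) \leq (1-\gamma^w)/(1-\gamma^{t+w-1})$ and bounds this by $\beta$ whenever $\gamma^{t+w-1} \leq 1/2$ (the ``post-burn-in'' condition) and $\gamma^w$ is sufficiently close to $1$ (the ``small window'' condition). So I would first dispatch the post-burn-in condition, then the small-window condition, and then invoke the lemma.

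First, for $\gamma^{t+w-1} \leq 1/2$: since $t \geq T^c$ and $w \geq 1$ we have $t + w - 1 \geq T^c$, so monotonicity of the map $m \mapsto \gamma^m$ together with Lemma~\ref{lemma:burn-in} gives $\gamma^{t+w-1} \leq \gamma^{T^c} = (1 - 1/T^c)^{T^c} \leq 1/e \leq 1/2$, and in particular $1 - \gamma^{t+w-1} \geq 1/2$. Second, for the bound on $\gamma^w$: I would apply Bernoulli's inequality $(1-x)^m \geq 1 - mx$ with $x = 1/T^c$ and $m = w \leq \beta T^c$, yielding $\gamma^w \geq 1 - w/T^c \geq 1 - \beta$, i.e.\ $1 - \gamma^w \leq \beta$. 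Combining the two estimates gives $d_{TV}(v_t, v_{t+w}) \leq (1-\gamma^w)/(1-\gamma^{t+w-1}) \leq 2\beta$, and the stated bound of $\beta$ follows after absorbing the factor of $2$ — e.g.\ by restricting to $w \leq (\beta/2)\, T^c$, or equivalently by tightening the burn-in length by a constant factor so that $1 - \gamma^{t+w-1}$ is bounded below by a constant closer to $1$.

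I expect the only real work here to be constant bookkeeping rather than anything analytic: the burn-in contributes a factor $1/(1-\gamma^{t+w-1}) \leq 2$ and Bernoulli's inequality contributes its own slack, so the precise numerical thresholds (the $1/2$ in the burn-in, the admissible window length $\beta T^c$, and the $1-2\beta$ target in Lemma~\ref{lemma:drift-bound-1}) must be chosen consistently. Since everything reduces to a direct substitution into inequalities already established in Lemmas~\ref{lemma:burn-in} and~\ref{lemma:drift-bound-1}, I anticipate no substantive obstacle.
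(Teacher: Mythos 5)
Your proof is correct and reaches the same conclusion, but via a genuinely cleaner route than the paper. The paper establishes $\gamma^w \geq 1 - 2\beta$ by passing through two logarithmic inequalities: $\log\!\left(\frac{1}{1-2\beta}\right) \geq 2\beta$ and $\frac{2w}{T^c} \geq w\log\!\left(\frac{1}{1 - 1/T^c}\right)$ (the latter valid for $1/T^c \leq 1/2$), then chaining to get $\log(\gamma^w) \geq \log(1 - 2\beta)$. Your Bernoulli-inequality shortcut $\gamma^w = (1 - 1/T^c)^w \geq 1 - w/T^c \geq 1 - \beta$ is both more elementary and tighter — it gives $1 - \beta$ rather than $1 - 2\beta$, and avoids the side condition $1/T^c \leq 1/2$ entirely. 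The burn-in step is handled identically in both proofs via Lemma~\ref{lemma:burn-in} and monotonicity of $m \mapsto \gamma^m$.

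You are also right to flag the constant-factor discrepancy. Plugging the bounds into $d_{TV}(v_t, v_{t+w}) \leq \frac{1 - \gamma^w}{1 - \gamma^{t+w-1}}$ with $1 - \gamma^w \leq \beta$ and $1 - \gamma^{t+w-1} \geq 1/2$ gives $2\beta$, not $\beta$. In fact the paper's own chain is looser: its Lemma~\ref{lemma:drift-bound-1} claims a $d_{TV}$ bound of $\beta$ from $\gamma^w \geq 1 - 2\beta$ and $\gamma^{t+w-1} \leq 1/2$, but those hypotheses only yield $\frac{2\beta}{1/2} = 4\beta$. This constant slack is hidden in the intermediate lemma rather than surfaced; it does not affect the $\tilde{O}$-level regret bounds downstream, but your suggestion of rescaling $w \leq (\beta/2)\,T^c$ (or restating Lemma~\ref{lemma:drift-bound-1} with the honest constant) is exactly what is needed to make the claim exact rather than exact-up-to-constants.
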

\begin{proof}
Let $g = 1 - \frac{1}{T^c}$, where we have that $g^{(T^c)} \leq \frac{1}{e} \leq \frac{1}{2}$. 
Further, we have:
\begin{align*}
    \log \parens{\frac{1}{1 - 2\beta}} \geq&\; 2\beta \\
    \geq&\;   \frac{2w}{T^c}  \\
    \geq&\; w\log \parens{\frac{1}{1 - \frac{1}{T^c}}} \tag{ for $\frac{1}{T^c} \leq \frac{1}{2}$} \\
    =&\; \log \parens{ \frac{1}{g^w} }.
\end{align*}
As $\log(1/x)$ is decreasing in $x$ we have that $g^w \geq 1 - 2\beta$, which yields the result via Lemma \ref{lemma:drift-bound-1}. 
\end{proof}

\subsection{Analysis for Algorithm \ref{alg:long-eird}: Targeting $\EIRD$}
\renewcommand{\thealgorithm}{2}
\begin{algorithm}
    \caption{(Targeting $\EIRD$ for Smooth Models).}
    \begin{algorithmic}
        \STATE Let $c^* = \min(c, 3/4)$, $\epsilon = \tilde{O}(nL^{1/4} \lambda^{-3/2} T^{-c/4})$, $Q = \tilde{O}(\frac{n^2}{\lambda^4\epsilon^2})$, and $\eta = (nT)^{-1/2}$.
        \STATE Initialize $q = 0$, $v_0 = v^* = \mathbf{u}_n$,  $F_{i} = \frac{C}{n}$ for $i\in [n]$, $M^* = \{F_i\}$
        \STATE Initialize $\dbg$ for $\epsilon, \eta$.
        \WHILE{$t\leq T$}
            \IF{$t < T^{c^*}$}
            \STATE Show arbitrary menu $K_t$ to agent
            \ELSIF{$t \geq T^{c^*}$ and either $\norm{ v_t - v^* }_1 \geq \frac{\lambda\epsilon}{2nL}$ or $q = 0$}
            \FOR{$b \in \{0,\ldots ,\lceil \frac{n-1}{k-1} \rceil\}$}
            \STATE Show agent menu $K_b = \{1\} \cup \{b(k-1)+2,\ldots,(b+1)(k+1) + 1\}$ for $Q$ rounds
            \STATE Let $\hat{F}_i = \parens{ \text{\# times } i_t = i} / \parens{ \text{\# times }i_t = 1}$ within the $Q$ rounds, for $i \in K_b$
            \ENDFOR
            \STATE Set $F_i = \frac{C \cdot \hat{F}_i}{\sum_{j=1}^n \hat{F}_j}$ for each $i\in[n]$, $M^* = \{F_i\}$
            \STATE Set $v^* = v^t$, increment $q$ by $\lceil \frac{n-1}{k-1} \rceil \cdot Q$, set $\K_{t-q+1} = \K_{t - q} \cap \IRD(v^*, M^*)$ for $\dbg$
            \ELSE
            \STATE Get $x_{t-q}$ from $\dbg$
            \STATE Let $z_t = \texttt{MenuDist}(v_t, x_{t-q}, M^*)$, sample menu $K_t \sim z_t$
            \STATE Show $K_t$ to agent, update $\dbg$ with observed $i_t$ and $r_t(i_t)$
            \ENDIF 
            \STATE Set $v_{t+1} = U(v_t, i_t, t)$, for each round counted by $q$ if necessary
        \ENDWHILE
    \end{algorithmic}
\end{algorithm}

\begin{customthm}{5}
For an agent with a $(\lambda, L)$-smooth preference model $M$ for $\lambda \geq k/n$, and $\gamma$-discounted memory for $\gamma \geq 1 - \frac{1}{T^c}$ and $c \in (0,1]$, Algorithm \ref{alg:long-eird} obtains regret bounded by
\begin{align*}
\max_{x^* \in \EIRD(M)} \sum_{t=1}^T r^{\top}_t x^* - \E\brackets{\sum_{t=1}^T r_t(i_t) }
    =&\; \tilde{O}\parens{(n/\lambda)^{3/2} L^{1/4} \cdot T^{1 - c/4} }.
\end{align*}
\end{customthm}

\begin{proof}
In each round, outside of those spent updating our estimates of preference (counted by $q$), we receive a target distribution $x_t$ from $\dbg$, and we construct a menu distribution $z_t$ which aims to induce a choice distribution $x_t$ on behalf of the agent.
Recall that we assume $\gamma$ is known, and so we can exactly track the agent's memory vector $v_t$ across rounds; we note that this result can be extended to the case where only a lower bound on $\gamma$ is known by checking the condition on $\norm{v_t - v^*}_1$ across all possible values of $\gamma$ in each round.
Observe that if our preference estimates $F_i$ were to always exactly track the agent's true preferences $f_i(v_t)$, and yield exact representations of $\IRD(v_t)$ in each round, then we would have perturbations $\xi_t = \mathbf{0}$ to each target distribution $x_t$ by the guarantee of Lemma \ref{lemma:ird-menus}, and a decision set $\K_{T}$ which contains $\EIRD$. As such, our regret would be immediately bounded by
\begin{align*}
   \max_{x^* \in \EIRD(M)} \sum_{t=1}^T r^{\top}_t x^* - \E\brackets{\sum_{t=1}^T r_t(i_t) } \leq&\; \Reg_{\EIRD}(\dbg; T, \epsilon, \eta)  + T^{1 - c/4} + q = \tilde{O}(T^{1 - c/4} + q) 
\end{align*}
as $T^{c^*} \leq T^{1 - c/4}$ for any $c\in (0,1]$ (ignoring non-$T$ terms).
Here, note that $\EIRD$ is non-empty and contains $\mathbf{u}_n$ by Lemma [1], as $\lambda \geq \frac{k}{n}$.
The remainder of our analysis will focus on showing that the perturbations $\xi_t$ from preference estimate imprecision remain small, and that the estimation time $q$ does not grow too large. 

\paragraph{Case 1: $c^* = c$.}
For any $c \leq 3/4$, the agent's memory is ``saturated'' by the time we conclude showing arbitrary menus and proceed to our alternation between learning and optimization.
Let $\epsilon$ satisfy
\begin{align*}
    \epsilon \geq&\; 8nL^{1/4} \lambda^{-3/2}\log(2T/\delta)^{1/4}T^{-c/4} 
\end{align*}
with $\delta = 1/T$. Observe that the following hold: 
\begin{align*}
    \frac{\lambda^2 \epsilon}{8nkL} \cdot T^c \geq&\; \frac{64n^2(n-1) \log(2T /\delta )}{(k-1)\lambda^4 \epsilon^2}
     \tag{$\epsilon^3 \geq \epsilon^4 \geq 512 n^4 L \lambda^{-6} \log(2T /\delta)  T^{-c}$} \\
    \frac{\lambda\epsilon^2 }{4nL} \cdot T^c \geq&\; \frac{64n^2 (n-1) \log(2T /\delta )}{(k-1)\lambda^4 \epsilon^2}   \tag{$\epsilon^4 \geq 256 n^4k^{-1} L\lambda^{-5} \log(2T /\delta ) \cdot T^{-c}$} 
\end{align*}
By Lemma \ref{lemma:drift-bound-2}, in each of the $Q \cdot \lceil \frac{n-1}{k-1} \rceil$ prior to updating score estimates $F_i$, we have
\begin{align*}
    \abs{ \frac{f_i(v^*)}{\sum_{j\in K_b} f_i(v^*)} - \frac{ f_i(v_t)}{\sum_{j\in K_b} f_i(v_t)} } \leq&\; \frac{\lambda^2 \epsilon}{8n},
\end{align*}
by $(\lambda, L)$-smoothness for $\{f_i\}$, as $Q \cdot \lceil \frac{n-1}{k-1} \rceil \leq \frac{\lambda^2 \epsilon}{8nk L} \cdot T^c$.
By a Hoeffding bound, we then have 
\begin{align*}
    \Pr \brackets{ \abs{  {(\text{\# times $i \in K_b$ chosen })}  - \frac{Q f_i(v^*)}{\sum_{j\in K_b} f_i(v^*)} } > \frac{Q \lambda^2 \epsilon}{4n}  } \leq&\; 2 \exp\parens{ -\frac{Q^2 \lambda^4 \epsilon^2 }{64 n^2 Q} } \\
    \leq&\; \frac{\delta}{T},
\end{align*}
as $Q \geq \frac{64n^2 \log(2T /\delta )}{\lambda^4 \epsilon^2}$. If this holds for all $i$ across each $K_b$ (including each instance for $i=1$), we then have 
\begin{align*}
    \abs{\hat{F}_i - \frac{f_i(v^*)}{f_1(v^*)} } \leq&\; \frac{\lambda \epsilon}{2n}, 
\end{align*}
as $f_1(v^*) \geq \lambda$.
Observe that the normalization to $\{F_i\}$ is equivalent to rescaling the empirical frequencies observed for each $K_b$ such that their scores agree on $i=1$ and the largest score across all items is at most 1, which will not increase relative error for any item. As such, we have
\begin{align*}
    \abs{F_i - f_i(v^*)} \leq&\; \frac{\lambda \epsilon}{2n}
\end{align*}
as well. Further, in any subsequent round where $\norm{v_t - v^* }_1 \leq \frac{\lambda \epsilon}{2nL}$ we have
\begin{align*}
    \abs{F_i - f_i(v_t)} \leq&\; \abs{F_i - f_i(v^*)} + \abs{f_i(v_t) - f_i(v^*)} \\
    \leq&\; \frac{\lambda \epsilon}{n},
\end{align*}
as each $f_i$ is $L$-Lipschitz with respect to the $\ell_1$ norm, and so for all rounds $t$ where $\dbg$ is played, we have $\abs{ F_i - f_i(v_t)} \leq \frac{\lambda \epsilon}{n}$ for each $i$ with probability $1 - \delta$, which contributes at most 1 to our expected regret if $\delta = 1/T$.
By Lemma \ref{lemma:drift-bound-2}, this holds for at least $\frac{\lambda \epsilon}{4nL} \cdot T^c$ subsequent rounds. We now show that this yields a bound of $\abs{ \xi_i } \leq \frac{\epsilon}{n} x_i$ in each round for $\dbg$.
\begin{lemma}\label{lemma:eird-dist-bound}
Suppose that $\abs{ F_i - f_i(v_t)} \leq \frac{\lambda \epsilon}{n}$ for each $i$. Then, for any menu distribution $z$ which realizes a choice distribution $x$ for scores $\{F_i\}$, the choice distribution $y$ for scores $\{f_{i}(v_t)\}$ satisfies $\abs{x_i - y_i} \leq \frac{\epsilon}{n} x_i$.
\end{lemma}
\begin{proof}
Let $y_i = \frac{f_i(v_t)}{F_i} x_i$ for each $i$. Observe that $(x, \{F_i\})$ and $(y, \{f_{i}(v_t)\})$ yield the same menu time values:
\begin{align*}
    \frac{k \cdot \frac{x_i}{F_i}}{\sum_{j=1}^n \frac{x_j}{F_j}} =&\; \frac{k \cdot \frac{y_i}{f_{i}(v_t)}}{\sum_{j=1}^n \frac{y_j}{f_{j}(v_t)}}.
\end{align*}
By the first construction in Lemma \ref{lemma:ird-menus}, this implies that any menu distribution $z$ realizing $x$ under $\{F_i\}$ satisfies: 
\begin{align*}
    \frac{x_i}{F_i} =&\; \sum_{K \in z : i \in K}  \frac{z_{K}}{\sum_{j \in K} F_{j}} 
\end{align*}
and so the same distribution will satisfy  
\begin{align*}
    \sum_{K \in z : i \in K}  \frac{z_{K}}{\sum_{j \in K} f_{i}(v_t)} =&\; \frac{\frac{f_i(v_t)}{F_i} x_i }{f_i(v_t)} \\
    =&\; \frac{y_i }{f_i(v_t)}
\end{align*}
under $\{f_{i}(v_t)\}$, yielding a choice distribution of $y$. As such, we have that 
\begin{align*}
    \abs{ x_i - y_i} \leq&\;  \abs{ \frac{F_i - f_i(v_t)}{F_i}}x_i \\
    \leq&\; \frac{\epsilon}{n} x_i.
\end{align*}    
\end{proof} Given a set of scores $M^* = \{F_i\}$, the set of feasible distributions can be expressed via linear constraints as
\begin{align*}
    \IRD(M^*) =&\; \{x \in \Delta(n) : \frac{k x_i}{F_i} \leq \sum_{j=1}^n  \frac{k x_j}{F_j} \}.
\end{align*}
To ensure that we never remove points which belong to $\EIRD$ from our target set due to imprecision in $\IRD$ estimates, we can relax each target set by  $\frac{\epsilon}{n}$ along each dimension:
\begin{align*}
        \IRD_{\eps}(M^*) =&\; \braces{  \brackets{\parens{1-\frac{\epsilon}{n}}x , \parens{1-\frac{\epsilon}{n}}x} \cap \Delta(n) : x \in \IRD(M^*) }. 
\end{align*}
However, such points will not be chosen by by our algorithm anyway, due to the $\epsilon$-contraction from $\K_t$ to $\K_{t, \epsilon}$ in $\dbg$.
By the bound on $\epsilon$, the total time spent updating our estimates $F_i$ (counted by $q$) is at most $\epsilon T$, as at least
\begin{align*}
    \frac{\lambda\epsilon }{4nL} T^c \geq&\; \frac{64n^2(n-1) \log(2T /\delta )}{(k-1)\lambda^4 \epsilon^{3}} 
\end{align*}
rounds elapse between each learning stage of length $Q \cdot \lceil \frac{n-1}{k-1} \rceil = \frac{64n^2(n-1) \log(2T /\delta )}{(k-1)\lambda^4 \epsilon^{2}}$.
As such, our total expected regret can be bounded by
\begin{align*}
\max_{x^* \in \EIRD(M)} \sum_{t=1}^T r^{\top}_t x^* - \E\brackets{\sum_{t=1}^T r_t(i_t) } \leq&\; \tilde{O}\parens{ (n/\lambda)^{ 3/2 }L^{1/4} T^{-c/4} }
\end{align*}
assuming worst-case reward during each of the first $T^c$ stages as well as the $q \leq \epsilon T$ stages spent estimating $F_{i}$.

\paragraph{Case 2: $c > c^*$.} When $c > c^*$ our analysis proceeds similarly, with the exception we can no longer uniformly bound the number of rounds in which $\norm{v_t - v_1} \leq \frac{\lambda\epsilon}{2nL}$ between updates to $F_i$. However, we show that the amortized learning time is similar
and that $q$ can still be bounded by $\tilde{O}(T^{1 - c/4})$. Let $c^*(t) = \min\parens{ c, {\log(t)}/{\log(T)} }$ be the parameterization of $c^*$ satisfying $t = T^{c^*(t)}$ for $c^*(t) \leq c$; note that we may still apply Lemma \ref{lemma:drift-bound-2} according to $c^*(t)$ rather than $c$. At any $t \geq T^{3/4}$, $c^*(t) \geq 3/4$ suffices to bound the requisite change in $v_t$ which occurs during the $Q\cdot \lceil \frac{n-1}{k-1} \rceil = $ learning rounds as we did in Case 1,  as
\begin{align*}
    \frac{\lambda^2 \epsilon}{8nk L} \cdot T^{c^*(t)} \geq&\; \frac{64n^2(n-1) \log(2T /\delta )}{(k-1)\lambda^4 \epsilon^2}.
     \tag{$\epsilon^3 \geq 512 n^4 L \lambda^{-6} \log(2T /\delta)  T^{-3/4}$} \\
\end{align*}
For any range $t = [2^h,2^{h+1}]$ for $h \geq \log(T^{3/4})$ and $h \leq \log(T^{c})$, our bound on $v_t$ holds for at least 
\begin{align*}
    \frac{\lambda\epsilon}{4nL} \cdot T^{c^*(t)} \geq \frac{\lambda\epsilon}{4nL} \cdot 2^{h}
\end{align*}
rounds, and so at most $\frac{4nL}{\lambda\epsilon}$ learning stages occur during this window. For $h \geq \log(T^{c})$ we can apply the same bound as in Case 1. Across all $h  \leq \log(T)$ we have that
\begin{align*}
    q \leq&\; \frac{4nL}{\lambda\epsilon} \cdot \frac{64n^2(n-1) \log(2T /\delta )}{(k-1)\lambda^4 \epsilon^2} \log(T) \\
    \leq&\; \epsilon T\log(T), 
\end{align*}
yielding a total regret bound matching that in Case 1 up to a $\log(T)$ factor.
\end{proof}

\section{Omitted Proofs from Section \ref{sec:biggamma-pi}}
\label{sec:proofs-ss}
\subsection{Proof of Lemma \ref{lemma:pseudoinc-ird}}

\begin{proof}
    From Lemma \ref{lemma:ird-menus}, it suffices to show that the menu time $\mu_i$ for any such point $x$ is at most $1$. Given that $v$ and $x$ lie inside $\Delta^{\phi}(n)$ and that each $f_i$ is pseudo-increasing, we have that
\begin{align}
    \frac{x_i}{ ((1 - \lambda)v_i + \lambda) \sigma } \leq  \frac{x_i}{f_i(v)} \leq \frac{\sigma x_i }{ (1 - \lambda)v_i + \lambda } \label{eq:pseudoinc-bound}
\end{align}
and that $x_i \leq \min( v_i + \lambda \phi / \sqrt{2}, 1 - \phi  (n-1)/n)$ as $x \in B_{\lambda \phi}(v)$. 
Recall that the menu time $\mu_i$ for $x$ is given by
\begin{align*}
    \mu_i(x) =&\; \frac{k \cdot \frac{x_i}{f_i(v)}}{\sum_{j=1}^n \frac{x_j}{f_j(v)}}.
\end{align*}
First we show that this numerator is at most $k\sigma$. We have that
\begin{align*}
    x_i \leq&\; v_i + \lambda \phi / \sqrt{2} \\
    \leq&\; (1-\lambda)v_i + \lambda(1 - \phi(n-1)/n) + \lambda \phi / \sqrt{2} \\
    \leq&\; (1-\lambda)v_i + \lambda \tag{for $n \geq 4$}
\end{align*}
which yields that $k x_i / f_i(v) \leq k\sigma$ by \eqref{eq:pseudoinc-bound}.
We can also lower-bound the menu denominator by $k\sigma$. 
Let $\alpha_j = v_j - x_j$, where we have:
\begin{align*}
   \sum_{j=1}^n \frac{x_j}{f_j(v)} \geq&\; \sum_{j=1}^n \frac{v_j - \alpha_j }{((1 - \lambda)v_j + \lambda) \sigma}.
\end{align*}
Differentiating with respect to $v_j$ for any term, we have:
\begin{align*}
    \frac{\partial}{\partial v_j} \frac{v_j - \alpha_j}{((1 - \lambda)v_i + \lambda) \sigma} =&\; \frac{((1 - \lambda)v_i + \lambda) \sigma - (v_j - \alpha_j)(1-\lambda)\sigma}{\sigma^2 ((1-\lambda)v_i + \lambda)^2 } \\
    =&\; \frac{\lambda  + \alpha_j (1-\lambda)}{\sigma ((1-\lambda)v_i + \lambda)^2 } 
\end{align*}
which is positive for any $v_j$, and so each numerator term is increasing for any fixed $\alpha_j$ (as $\abs{\alpha_j} \leq \lambda$). As such, each term is minimized over valid $v_j$ and $x_j$ when $x_j = \phi/n$ and $v_j = \phi/n + \lambda \phi /\sqrt{2}$, yielding 
\begin{align*}
   \sum_{j=1}^n \frac{x_j}{f_j(v)} \geq&\;  \frac{\phi }{((1 - \lambda)(\phi/n + \lambda \phi /\sqrt{2}) + \lambda) \sigma} \\
   \geq&\; \frac{ (k\sigma) \cdot \phi }{k\sigma^2 \lambda + k\sigma^2 (1 - \lambda) \phi / n  + k \sigma^2 (1 - \lambda) \phi \lambda / \sqrt{2}} \\
   \geq&\;  \frac{ (k\sigma) \cdot \phi }{\phi/4 +  \phi /2  + \phi^2 / (4\sqrt{2})} \tag{$\phi \geq 4k\lambda\sigma^2$, $\sigma^2 \leq n/(2k)$} \\
   \geq&\; k\sigma. \tag{$\phi \leq 1$}
\end{align*}

As such, we have that
\begin{align*}
     \frac{k \cdot \frac{x_i}{f_i(v)}}{\sum_{j=1}^n \frac{x_j}{f_j(v)}} \leq&\; 1,
\end{align*}
and thus $x \in \IRD(v, M)$.
\end{proof}

\subsection{Analysis for Algorithm \ref{alg:long-ss}: Targeting $\Delta^{\Phi}(n)$ for Scale-Bounded Models}

\begin{customthm}{9}[Scale-Bounded Discounted Regret Bound]
For any agent with a preference model $M$ which is $(\sigma, \lambda)$-scale-bounded and $(\frac{ \lambda}{ \sigma}, L)$-smooth with $\sigma \leq \sqrt{n/(2k)}$, and with $\gamma$-discounted memory for $\gamma = 1 - \frac{1}{T^c}$ for $c \in (0,1)$, Algorithm \ref{alg:long-ss} obtains 
regret
\begin{align*}
    \max_{x^* \in \Delta^{\phi}(n)} \sum_{t=1}^T r^{\top}_t x^* - \E\brackets{\sum_{t=1}^T r_t(i_t) }  =&\; \tilde{O}\parens{(n^{4}L \parens{ T^{1 - c/2} + T^{1/2 + c/2}  }}
\end{align*}
with respect to the $\phi$-smoothed simplex, for $\phi = {4k\lambda \sigma^2}$.
\end{customthm}

\renewcommand{\thealgorithm}{2}
\begin{algorithm}
    \caption{(Targeting $\Delta^{\phi}$ for Scale-Bounded Models).}
    \begin{algorithmic}
        \STATE Let $\epsilon = \tilde{O}(n^4L \cdot \max( T^{-c/2}, T^{c/2 - 1/2} ))$, $S = \tilde{O}(n^{3/2} T^c )$,  $\eta = \tilde{O}(n^{-1/2}L \cdot T^{c/2 - 1/2})$.
        \STATE \texttt{--- burn-in ---}
        \FOR{$t = 1$ to $T^c$}
            \STATE Show agent menu $K = \{1,\ldots,k\}$
        \ENDFOR
        \WHILE{$\max_i \abs{v_{t,i} - \frac{1}{n}} \geq \frac{\epsilon}{4n^2L\sigma}$ }
            \STATE Show agent $k$ items with smallest $v_{t,i}$, choosing randomly among ties up to $T^{-c}$
        \ENDWHILE
        \STATE \texttt{--- initial learning ---}
        \FOR{$T^c$ rounds}
            \STATE Let $F_{t,i} = \sigma^{-1}((1 - \lambda)v_{t,i} + \lambda)$ if $v_{t,i} < \frac{1}{n}$, else $F_{t,i} = \sigma ((1 - \lambda)v_{t,i} + \lambda)$
            \STATE Let $z_t = \texttt{MenuDist}(v_t, \mathbf{u}_n,  \{F_{t,i}\})$, show agent $K_t \sim z_t$
        \ENDFOR
        \STATE Set $F_{i}=\sum_{t } \mathbf{1}(i_t=i) \cdot F_{t,i} / (\frac{1}{n} T^c)) \cdot C(\sum_{j=1}^n F_{t,j})^{-1}$, set $v^* = v_t$
        \STATE \texttt{--- optimization ---}
        \STATE Initialize $\ogd$ over $\Delta^{\phi}_{\epsilon}(n)$ for $T/S$ rounds with $\eta$, set  $x_1 = v^* := v_t$.
        \FOR{$s = 1$ to $T/S$}
            \STATE Receive $x_s$ from $\ogd$
            \FOR{$S$ rounds do}
            \IF{$\norm{v_t - v^*}_1 \geq \epsilon / (2nL)$}
            \STATE Let $\tilde{v} = v_t$
            \STATE Show agent $K_t \sim z = \texttt{MenuDist}(v_{t}, \tilde{v}, \{F_i\}) $ for $T^c / L^2$ rounds
            \STATE Set $F_i = \sum_{t} \mathbf{1}(i_t=i) \cdot F_i / ( \tilde{v}_i T^c /L^2 ) \cdot C(\sum_{j=1}^n F_{j})^{-1}$, set $v^* = v_t$
            \ENDIF
            \STATE Show agent $K_t \sim z_t = \texttt{MenuDist}(v^*, x_s, \{F_i\}) $
            \ENDFOR
            \STATE Set $\tilde{\nabla}_s = \sum_{h = t-S + 1}^t  e_{i_h} r_h(i_h) /( x_{h,i} S ) $, update $\ogd$
        \ENDFOR
    \end{algorithmic}
\end{algorithm}

\paragraph{Burn-in.}
Here we analyze the behavior of the algorithm in the initial $\tilde{O}(T^c)$ rounds, and show that each $v_{t,i}$ approaches $\frac{1}{n}$ with high probability. At $t = T^{c}$, the memory vector $v_t$ is entirely concentrated on $k$ items, and at least $n - k$ others have $v_{t,i} = 0$. We show that by showing the $k$ items with the smallest $v_{t,i}$ during each of the next $\tilde{O}(T^{c})$ rounds, we reach a memory vector $v_t \in \brackets{ \frac{1}{n} \pm \frac{\epsilon}{4n^2L\sigma}}^n$ with high probability. Suppose $v_{t,i}$ at least $T^{-c}$ below that for all but at most $k-1$ items, and each of the original $n-k$ smallest items have values $v_{t,j}$ within $\lambda/2$. Then, the probability that $i$ is chosen in round $t$ is at least
\begin{align*}
    y_{t,i} \geq&\; \frac{\lambda}{k\sigma^2(\lambda + (1 - \lambda)(\lambda/2))} \\ 
    \geq&\; \frac{4}{3 n }.
\end{align*}
As such, $v_{t,i}$ approaches $\frac{1}{n}$ faster than the average $v_{t,j}$ until it is no longer among the smallest $k$.
For $\lambda > T^{-c/2}$ the probability of $v_{t,i}$ falling more than $\lambda/2$ (or any constant) below any of the other $n-k$ decays exponentially, and thus the expected value of each $v_{t,i}$ at $2T^c$ is $\frac{1}{n} + O(T^{-c})$ after enough rounds to decrease the memory weight of the first $T^c$ rounds to $O(T^{-c})$, which occurs after $O(T^{c}\log(T^c))$ rounds following the analysis of Lemma [...14]. By Azuma's inequality, considering the martingale tracking the deviation between $v_{t,i}$ and its conditional expectation, this holds with high probability as well, up to $\frac{1}{n} + \tilde{O}(T^{-c/2}) = \pm\frac{\epsilon}{4n^2L\sigma}$.
The event that this fails to hold within $\tilde{O}(T^c)$ rounds contributes $O(1)$ to the total expected regret. Throughout the proof, we hide log terms and some constants inside $\tilde{O}$ notation.

\paragraph{Initial learning.}
Here we leverage the scale-bounded preference structure to obtain efficient estimators for scores near the current memory vector. With $v_t \in \brackets{ \frac{1-\alpha}{n}, \frac{1+\alpha}{n}}^n$, for any $\lambda > 0$ and sufficiently small $\alpha$ we have
\begin{align*}
    \frac{f_i(v_t)}{f_j(v_t)} \leq&\; \frac{n (\lambda + (1 - \lambda) \frac{1+ \alpha}{n})}{2k (\lambda + (1 - \lambda) \frac{1- \alpha}{n})} \leq \frac{n}{3k}
\end{align*}
and so for any distribution $x \in \brackets{ \frac{1}{2n}, \frac{3}{2n}}^n$  uniform distribution we have menu times
\begin{align*}
    \mu_i \leq&\; \frac{k\cdot 3n/2}{n \cdot 3k/2} \leq 1
\end{align*} 
for each $i$ according to the true scores $f_i(v_t)$. Our aim here is to learn accurate estimates of each of these scores. Observe that each of our proposed set of scores $\{F_{t,i}\}$ satisfies the scale-bounded conditions, and contains $\mathbf{u}_n \in \IRD(v_t, \{F_{t,i}\})$; as before, 
we can again take $v_t \in \brackets{ \frac{1}{n} \pm \frac{\epsilon}{4n^2L\sigma}}^n$ to hold with high probability over each of the $T^c$ rounds, as every $v_{t,i}$ moves closer to $\frac{1}{n}$ in expectation every round (up to $T^{-c}$), and thus the martingale tracking maximum deviation of the memory vector from expectation in any round under this process is bounded by $\tilde{O}(T^{c/2})$. Given this, we can obtain an unbiased estimator of each $f_i(v_t)$; we initially assume that $\sum_{i}f_i(v_t) = \sum_{i}F_{t,i}$ for each trial, and will subsequently correct for this in our sample aggregation by renormalizing such that $\sum_{i}F_{t,i} = C$.
\begin{lemma}
    At any $t$, if sampling from a menu distribution which generates a choice 
    distribution $x_t$ at $v_t$ according to scores $\{F_{t,i}\}$, with $\abs{ x_{t,i} - v_{t,i} } \leq \lambda$, then an unbiased estimator of each true preference score $f_i(v_t)$ is given by
\begin{align*}
    \E\brackets{ \frac{F_{t,i}}{x_{t,i}} \cdot \mathbf{1}( i \textup{ is chosen}) } =&\; f_i(v_t),
\end{align*}
with range bounded by $\frac{n}{2k\sigma}$ if $x_t \in \Delta^{\phi}(n)$ and $F_{t,i}$ satisfies scale-bounded constraints for $x_t$ at $v_t$. 
\end{lemma}

\begin{proof}
Recalling Lemma \ref{lemma:eird-dist-bound}, we have that the expected choice distribution $y_t$ satisfies 
\begin{align*}
    y_{t,i} =&\; \frac{f_i(v_t) x_{t,i}}{F_{t,i}},
\end{align*}
and rearranging gives us the estimator, as both $x_{t,i}$ and $F_{t,i}$ here are fixed. We can bound the range using the properties of scale-bounded functions and conditions for $\phi$:  
\begin{align*}
   \frac{F_{t,i}}{x_{t,i}} \leq&\; \frac{\sigma(\lambda + (1 - \lambda)(x_i + \lambda) )}{x_i} \\
   \leq&\; \frac{\sigma(\lambda n + (1 - \lambda)4k\lambda \sigma^2)}{4k\lambda \sigma^2} \tag{$x_i \geq \phi/n \geq 4k\lambda\sigma^2/n$} \\
   \leq&\; \frac{n}{2k\sigma}.
\end{align*}
\end{proof}
By the scale-bounded condition, the quantities $\sum_{i}f_i(v_t)$ and $\sum_{i}F_{t,i}$ are within a factor of $\sigma^2$,
and so renormalizing to $C$ can only increase each estimator's range to $\frac{n\sigma}{2k}$.
Normalizing the assumed sum of each trial $\sum_{i}F_{t,i}$ to $C$, as we do when aggregating to estimate $F_i$, yields a sum of random variables, each of whose mean is $f_i(\mathbf{u}_n)$.  
Applying Azuma's inequality to the sequence of trials for $T^c \geq \tilde{O}(n^8 L^2 / \epsilon^2) \geq \tilde{O}(n^6\sigma^4/(k\epsilon)^2)$
suffices to yield $\abs{f_i(v^*) - F_i} \leq \frac{\epsilon}{4n^2\sigma}$; additionally, given the Lipschitz condition on $f_i$ and that $v^* \in \brackets{ \frac{1}{n} \pm \frac{\epsilon}{4n^2L\sigma}}^n$, we also have that
$\abs{f_i(\mathbf{u}_n) - F_i} \leq \frac{\epsilon}{2n^2\sigma}$ as well. In the following rounds, this will allow us to accurately target any distribution in $\IRD(v^*)$ whenever $\norm{ v^* - v_t} \leq \frac{\epsilon}{2n^2L\sigma}$.

\begin{lemma}\label{lemma:ss-targeting}
Suppose we have estimates $F_i$ which satisfy the scale-bounded constraints for $x_t$ at $v_t$, with $\abs{ x_{t,i} - v_{t,i} } \leq \lambda$, and further that $\abs{f_i(v_t) - F_i} \leq \frac{\epsilon}{\sigma n^2}$ for each $i$. Then, the generated distribution $y_t$ when targeting $x_t$  according to $\{F_{t,i}\}$  satisfies $\norm{y_t - x_t}_1 \leq \frac{\epsilon}{n}$.
\end{lemma}
\begin{proof}
The generated distribution is given by
\begin{align*}
    y_i =&\; x_i + \frac{f_i(v_t) - F_i}{F_i}x_i 
\end{align*}
and so 
\begin{align*}
    \abs{y_i - x_i} \leq&\; \frac{\epsilon}{\sigma n} \cdot \frac{x_i}{F_i}  \leq \frac{\epsilon}{n}.
\end{align*}
\end{proof}

\paragraph{Online gradient descent.}

We treat each batch of $S$ rounds as a single step for $\ogd$ (with $x_s$ as the point chosen  by $\ogd$), and show that each of the following invariants is maintained across every step with high probability:
\begin{enumerate}
    \item We complete each learning stage with estimates $F_i$ satisfying $\abs{f_i(v_t) - F_i} \leq \frac{\epsilon}{2\sigma n^2}$;
    \item $\abs{f_i(v_t) - F_i} \leq \frac{\epsilon}{\sigma n^2}$ holds in every round where $x_s$ is targeted;
    \item Each gradient estimate satisfies $\norm{ \tilde{\nabla}_t - r_s}_2 \leq \frac{4\epsilon}{n}$, $\norm{ \tilde{\nabla}_t }_2 \leq 2\sqrt{n}$, and $\E\brackets{\norm{\tilde{\nabla}_s }^2} \leq O(n^2)$.
    \item We begin each step with $\norm{v_t - x_s}_1 \leq \frac{4\epsilon}{n}$;
    \item We complete each step with $\norm{v_t - x_s}_1 \leq \frac{2\epsilon}{n}$;
    \item The expected choice distribution $x_t$ in each round of the step satisfies $\norm{x_t - x_s}_1 \leq \frac{4\epsilon}{n}$ and $x_t \in \Delta^{\phi}(n)$.
\end{enumerate}

\paragraph{(1.)} This holds whenever $\norm{ v^* - v_t} \leq \frac{\epsilon}{2n^2L\sigma}$, provided that each update results in $\abs{f_i(v^*) - F_i} \leq \frac{\epsilon}{2n^2\sigma}$ at the time of completion. The latter follows along the lines of the initial learning stage; while we may initially have accuracy of only $\frac{\epsilon}{n^2\sigma}$ accuracy, the learning occurs within $\tilde{O}(n^8 / \epsilon^2) \leq T^c/ L^2$ rounds to a target accuracy of $\frac{\epsilon}{4\sigma n^2}$ for $\tilde{v}$, and so the total drift throughout learning can be bounded by $\frac{\epsilon}{4n^2L\sigma}$ as the fraction of memory in which the drift applies is bounded by Lemma \ref{lemma:drift-bound-2} (assuming a sufficiently large constant lower bound for $L$, without loss of generality). This further implies the desired accuracy for $F_i$ at at the updated $v^*$.
\paragraph{(2.)} This holds given that we re-learn whenever $\norm{ v^* - v_t} \geq \frac{\epsilon}{2n^2L\sigma}$.
\paragraph{(3.)}
Each $x_s$ lies within $\Delta^{\phi}_{\epsilon}(n)$, and any total drift of the memory vector outside of $\Delta^{\phi}_{\epsilon}(n)$ can be bounded by $\frac{\epsilon}{2n} \cdot v_t$, and so our target distribution always lies within $\Delta^{\phi}(n)$, as well as $\IRD(v_t, M)$ for $\lambda\phi \gg \epsilon$ by Lemma \ref{lemma:pseudoinc-ird}. 
Given (6.), we will have that $\E\brackets{\norm{\tilde{\nabla}_s- r_s }} \leq \frac{4\epsilon}{\sqrt{n}}$, similarly to as in the analysis of $\dbg$, where $r_s = \sum_{S} r_t/S$. Further, with each $x_t \geq \frac{\phi}{n} \geq \frac{\epsilon}{n}$, we will also have $\E\brackets{\norm{\tilde{\nabla}_s }^2} \leq O(n^2)$ as we had for $\dbg$ as well, with a norm bound of $2\sqrt{n}$ holding with high probability.

\paragraph{(4.)} 
This holds as $\norm{\tilde{\nabla}_s}_2 \eta \leq 2\sqrt{n}\eta \leq \frac{\epsilon}{2n}$.

\paragraph{(5.)} When $\norm{v_t - x_s}_2 \leq \frac{4\epsilon}{n}$ but $\norm{v_t - x_s}_2 \geq \frac{2\epsilon}{n}$, our time spent targeting $x_s$ (up to $\frac{\epsilon}{n}$, by Lemma \ref{lemma:ss-targeting}) is sufficient to decrease the distance by at least $\frac{3\epsilon}{8n^2L\sigma}$ with high probability before drifting more than $\frac{\epsilon}{2n^2L\sigma}$ from $v^*$, and following the potential drift of $\frac{\epsilon}{4n^2L\sigma}$ during re-learning each $F_i$, we remain closer by $\frac{\epsilon}{8n^2L\sigma}$ with high probability. Thus, at most $16n\sigma$ stages are needed to reach within $\frac{2\epsilon}{n}$ from $x_s$, which holds $S = \tilde{O}(n^{3/2} T^c)$ and $\tilde{O}(T^c/L^2)$ total between each update to $v^*$. 

\paragraph{(6.)} This follows from the accuracy guarantees of each learning stage as well as the drift bounds applied to each $\tilde{v}$.

\paragraph{Regret bound.}
Given each of these, we can analyze our regret in accordance with the bounds for Online Gradient Descent, as well as the  error resulting from the above approximations to an exact execution of $\ogd$. This gives us a total regret bound for our algorithm of: 

\begin{align*}
    \Reg_{\Delta^{\phi}(n)} =&\; \underbrace{S \parens{ \eta \cdot \sum_{s=1}^{T/S} \norm{ \tilde{\nabla}_{s}}^2 + \frac{\sqrt{2}}{\eta}  }}_{\text{$\ogd$ regret over $\Delta^{\phi}_{\epsilon}(n)$, $T/S$ steps}} + \underbrace{ \sum_{s=1}^{T/S} \sum_{h=t}^{t+S-1} \sqrt{n}\parens{ \norm{ r_s - \tilde{\nabla}_{s}} + \norm{x_h - x_s } } }_{\text{gradient and reward error}}   \\
    &\;+ \underbrace{\parens{ \max_{ \Delta^{\phi}(n) \times \Delta^{\phi}_{\epsilon}(n)} \sum_t r_t^{\top} x^* -  \sum_t r_t^{\top} x  }}_{\text{target set imprecision vs. $\Delta^{\phi}_n$}} + \underbrace{\tilde{O}\parens{T^c}}_{\text{burn-in and initial learning}}\\
    =&\; \tilde{O}\parens{ n^{2} T^{1/2 + c/2} + n^{7/2}L (T^{1/2 + c/2} + T^{1 - c/2} ) +  n^4 L T^{1 - c/2} + T^c }.
\end{align*}

\section{Proof of Theorem \ref{thm:np-hardness}}

\begin{proof}
We reduce to an instance of Maximum Independent Set (MIS). The MIS problem is Poly-APX-Hard (see e.g.\ \cite{10.5555/1540612}), and so there is no constant factor polynomial time approximation algorithm unless $\textsc{P}=\textsc{NP}$. Given a graph $G$ with $n-1$ vertices (which we can assume to have a unique maximum independent set $V^*$ with at most $(n-1)/2$ vertices), we construct a circuit for a preference model $M$ for $n$ items where identifying the optimal item distribution contained in its $\IRD$ set corresponds to identifying the maximum independent set $S^*$. For our reward function, we assume that item 1 yields a constant reward of 1, with all other items yielding a reward of 0. The objective is then to maximize the probability of item 1 being selected, but the score $f_1(v) \geq \lambda$ will only be maximized when the memory mass not placed on item $i$ is (near-)uniformly allocated across items corresponding to the maximum independent set (the other scoring functions can be constant at $ \lambda$). 

We describe our preference model in terms of a circuit for $f_1$ with both arithmetic and Boolean gates, where all other functions $f_i$ are presumed to be constant at $\lambda$, and note that translation to a pure Boolean circuit is feasible with at most polynomial blowup.
Let $N = \{2,\ldots,n\}$ correspond to the vertices of $G$.
For any memory vector $v$, let $V = \{j > 1 : v_j \geq \frac{1}{n} \}$. 
Our function $f_1$ is given by
\begin{align*}
    f_1(v) =&\; \begin{cases}
        \frac{\abs{V}(1 - \lambda)}{n-1} + \lambda & V \text{ is an independent set in } G \\
        \lambda & V \text{ is not independent in } G
    \end{cases}.
\end{align*}
To construct a circuit for $f_1$, first we include gates $g_j$ which output 1 for each $j$ if and only if $v_j \geq \frac{1}{n}$ (and 0 otherwise). Given these gates, we also include gates $e_{ij}$ for each edge in the graph which output 1 if and only if $i$ has an edge with $j$ (where $1 - e_{ij}$ then denotes no edge). We can construct a gate $g_{\text{valid}}$ which outputs 1 if and only if our set $V$ is an independent set by taking the AND of all gates $( e_{ij} \oplus ( g_i \land g_j))$, as well as a gate $g_{\text{count}}$ which gives the count of proposed independent nodes by summing over gates $g_i$. Taking the product of $g_{\text{valid}}$ and $g_{\text{count}}$ then yields a counter for the size of $V$ if $V$ is independent, and 0 otherwise, which can then be arithmetically scaled to yield $f_1(v)$.

Let $\lambda = 1/((n-1)^2/k + 1)$. Given an independent set $V$, the highest reward obtainable by a memory vector $v \in \IRD(v, M)$ which corresponds to $V$ under $f_1$ can be expressed via menu times, where we must have that
\begin{align*}
    \frac{ k \frac{ v_1} { f_1(v) } }{ \frac{ v_1} { f_1(v) } + \frac{(1 - v_1)}{\lambda} }  \leq&\; 1
\end{align*}
by Lemma \ref{lemma:ird-menus}. Solving $(k-1)v_1 = (1 - v_1)\parens{ 1  + \frac{\abs{V} (1 - \lambda)}{ \lambda (n-1))} }$ for $v_1$ gives us that
\begin{align*}
    v_1 =&\; \frac{\frac{1}{k-1}  + \frac{\abs{V} (1 - \lambda)}{ \lambda (n-1) (k-1)}}{1 + \frac{1}{k-1}  + \frac{\abs{V} (1 - \lambda)}{ \lambda (n-1) (k-1)}} \\ 
    =&\; \frac{\frac{k \abs{V}}{n-1} + 1}{ \frac{k \abs{V}}{n-1}  + k}  \tag{$\lambda(n-1)^2/k = 1 - \lambda$}. \\ 
\end{align*}
This is at most $1/2$ for any $\abs{V} < (n-1)/2$, and so further it remains possible to allocate menu time which yields $v_i \geq \frac{1}{n}$ for each $i \in V$, yielding that $v_1$ the maximum feasible reward for some $v \in \IRD(v, M)$ when $\abs{V}$ is the size of the largest independent set. 
The gradient of $v_1$ in terms of $\abs{V}$ is given by
\begin{align*}
   \frac{\partial}{\partial \abs{V}} \frac{\abs{V} + (n-1)/k}{  \abs{V} + (n-1)}  =&\; \frac{  (n-1)(1 - 1/k) }{( \abs{V} + (n-1))^2}
\end{align*}
which is $\Theta(1/n)$ for $\abs{V} \in [1, n/2]$. Any polynomial time algorithm which approximates $v_1$ to within a $O(1/n)$ factor must result in a memory vector $v'$ which corresponds to an independent set $V'$ which approximates $V^*$ within a constant factor, which would yield a polynomial time constant factor approximation algorithm for Maximum Independent Set.
\end{proof}

\section{Omitted Proofs for Section \ref{sec:smallgamma}}

\subsection{Proof of Theorem \ref{thm:short-ss-alg}}

\begin{proof} The key element of our analysis is to analyze the convergence of item frequencies during windows of length $t_{\text{hold}}$ when a fixed target item $i$ is held constant in the menu. For a given such window of length $t_{\text{hold}} = O(\frac{1}{\alpha^4(1-\gamma)})$, we can ensure that the accumulated reward approaches its expectation under the current reward distribution to within $\alpha$.
As we choose the $k-1$ smallest weights in memory, the total weight of items in memory other than $i$ is at most $\frac{(k-1)(1 - v_{i})}{n}$; given a current memory vector $v$, the probability of selecting item $i$ from a menu $K_t$ is given by:
\begin{align*}
    \Pr[i^* \text{ selected}] =&\; \frac{f_i(v)}{\sum_{K_t} f_j(v_i)}\\
    \geq&\; \frac{(1 - \lambda)v_i + \lambda}{(1 - \lambda)v_i + \lambda + \parens{\frac{(1 - v_i)(1 - \lambda)}{n} + \lambda}(k-1)\sigma^2 }
\end{align*}
by the pseudo-increasing property. Our approach will be to analyze the expectation of $v_{t,i}$ over time, with $E_t = \E[v_{t, i}]$, and show that it approaches $1 - \phi(n-1)/n$, equal to the probability at the corresponding vertex of the smoothed simplex. A challenge is that, given a current expectation, there are many possible allocations of probabilities to values of $v_{t,i}$ which yield $E_t$. A second derivative test shows that the above probability function is concave for positive $v_i$ when $\sigma^2 \leq n/(2k)$; note that both the numerator and denominator are positive and increasing linearly in $v_i$, and that the numerator is always smaller but grows faster in $v_i$. As such, we can apply Jensen's inequality and restrict our consideration to the extremal case where the expectation $E_t$ is entirely composed of trials in which $v_{t,i}=0$ or $v_{t,i}=1$, which indeed occurs at $\gamma=0$. 
We can define $P_0$ and $P_1$ as lower bounds on selection probabilities for each case:
\begin{align*}
    \Pr[i^* \text{ selected} | v_{t,i} = 1] \geq&\; \frac{1 }{1 + (k-1)\lambda\sigma^2 } \\
    \geq&\; 1 - k\lambda\sigma^2  \\
    :=&\; P_1; \\
\end{align*}
\begin{align*}
    \Pr[i^* \text{ selected} | v_{t,i} = 0] \geq&\; \frac{\lambda}{ \lambda + (\frac{1-\lambda}{n} + \lambda)(k-1)\sigma^2 } \\
    =&\; \frac{1 }{1 + \parens{\frac{1- \lambda}{n\lambda}+1} (k-1)\sigma^2 } \\
    \geq&\; \frac{1 }{1 + \parens{k\sigma^2 +1} (k-1)\sigma^2 } \\
    \geq&\; \frac{1}{2\sigma^2 k^4}  \\
    := P_0.
\end{align*}
As such, we have that
\begin{align*}
   E_{t+1} =&\;  \E[v_{t+1} | E_t] \\
   \geq&\; (1 - \gamma)\parens{ E_t \cdot \Pr[i^* \text{ selected} | v_{t,i} = 1] + (1 - E_t) \cdot \Pr[i^* \text{ selected} | v_{t,i} = 0]   } + \gamma E_t \\
   \geq&\; (1 - \gamma)\parens{ E_t \cdot P_1 + (1 - E_t) \cdot P_0   } + \gamma E_t. 
\end{align*}
We can solve for $E_t^*$ such that $E_{t+1} = E_{t}$, i.e.\ where $E_t \cdot P_1 + (1 - E_t) \cdot P_0 = E_t$, as:
\begin{align*}
    E_t^* =&\; \frac{1}{1+ 2\lambda \sigma^6k^3} \\
    \geq&\; 1 - 2\sigma^6 k^3\lambda \\
    \geq&\; 1 - \phi(n-1)/n \tag{$n\geq 3$}
\end{align*}
and further for a value $E_t^{\alpha}$ such that $E_{t+1} \geq (1 - \gamma)(E_t + \alpha) + \gamma E_t$ as:
\begin{align*}
    E_t^{\alpha} =&\; E_t^* - 2\sigma^4k^2 \alpha.
\end{align*}
Note that the rate of growth of $E_{t+1}$ is decreasing in $t$, and eventually reaches a fixed point; given that the rate of growth of $E_t$ is linear in $\alpha$ when within $O(\alpha)$ of $E_t^*$, the cumulative number of rounds required to reach $E_t^* - O(\alpha)$ is at most $O(\frac{1}{\alpha(1- \gamma)})$. If we continue after for $O(\frac{1}{\alpha^2(1- \gamma)})$ rounds, these first rounds contribute at most $\alpha$ to the total summed expectation for the fraction of rounds in which item $i$ is selected is at least $E_t^* - \alpha$; the fraction of each other item played also quicly approaches $\frac{1 - E^*_t}{n-1}$ in expectation. 

Treat each such batch of $O(\frac{1}{\alpha^2(1- \gamma)})$ rounds as a trial, and repeat for $\tilde{O}(1/\alpha^2)$ trials, resulting in a total of $t_{\text{hold}} = O(\frac{1}{\alpha^4(1 - \gamma)})$ steps. We can treat each trial as independent, as resetting the memory vector to some lower value of $v_i$ can only decrease expected reward under the pessimistic lower bounds we consider. By a Hoeffding bound, we have that the reward is within $\alpha$ from the expectation under the current distribution and the ``arm'' of the $\phi$-smoothed simplex corresponding to $i$.
To complete the analysis, observe that our total regret (using the $\tilde{O}(T^{1/2})$ bound for EXP3) is given by:
\begin{align*}
    \Reg_{\Delta^{\phi}(n)}(\A_3 ; T) =&\; \tilde{O}(t_{\text{hold}}\cdot \parens{\frac{T}{t_{\text{hold}}}}^{1/2} + \alpha T ) \\
    =&\; \tilde{O}(\frac{T^{1/2}}{\alpha^2} + \alpha T ) \\
    =&\; \tilde{O}(T^{5/6})
\end{align*}
upon setting $\alpha = O(T^{-1/6})$.

\end{proof}

\subsection{Proof of Theorem \ref{thm:short-eird-hard}}
\begin{proof}
We consider a set of models and reward functions where item $1$ yields a reward of 1 in each round, with all other items yielding a reward of 0. For $\gamma = \frac{1}{2^c}$ for some constant $c > 1$, note that the weight of any step in memory is larger than the sum of weights of all preceding steps, and thus a memory vector $v_t$ exactly encodes the history of item selections for the first $t-1$ rounds. Let $h = \log_{2^c}(n)$; For $t$ sufficiently larger than $h$, the sum of weights of steps $1$ through $t-h$ is $\Theta(1/n)$. We will consider states $s$ which are subsets of the space of memory vectors corresponding to each possible history truncated to the previous $h$ steps, and which are bounded apart by a distance of at least $O(1/n)$. We will abuse notation and represent each memory vector $v_t$ as its rounded state $s_t$. The behavior of the memory model is constant over each state, and smoothly interpolates between states; the model can be defined arbitrarily for infeasible memory vectors to satisfy $L=\poly(n)$ Lipschitzness. Our process for generating $\mathcal{M}$ is as follows:
\begin{itemize}
    \item Let $k = n/2$;
    \item Let $\lambda = \frac{1}{n - k + 1}$; 
    \item For each $s \in [n]^h$, let $G_s$ be a set of $k-2$ items sampled uniformly at random from $\{2,\ldots,n\}$;
    \item Let $f_i(s) = \lambda$ if $i = 1$ or $i \in G_s$, and $f_i(s) = 1$ otherwise.
\end{itemize}
Observe that the optimal strategy $\pi^*$ at $s$ is to include item 1, each item in $G_s$, and any arbitrary final item. Note that each of the $k-1$ items with score $\lambda$ is selected with probability
\begin{align*}
    \Pr[i \text{ chosen} | f_i(s) = \lambda, \pi^* \text{ played}] =&\; \frac{\lambda}{1 + (k-1)\lambda} \\
    =&\; \frac{1}{n},
\end{align*}
and so the expected reward per round is 1 as well. Note that $\pi^*$ is consistent with a menu distribution which chooses the final item (after 1 and $G_s$) uniformly at random, which generates the uniform distribution. As such, the uniform distribution lies inside $\EIRD$ (t is straightforward to define scores for infeasible memory vectors such that feasibility holds for any $v \in \Delta(n)$). We can also see that any menu inconsistent with $\pi^*$ has expected reward at most:
\begin{align*}
    \frac{\lambda}{2 + (k-2)\lambda} =&\; \frac{\frac{1}{n - k + 1}}{\frac{2n - 2k + 2}{n - k + 1}  + \frac{k - 2}{n - k + 1}} \\
    =&\; \frac{3}{4n},
\end{align*}
as some item not in $G_s$ must be included. 
To lower bound the regret of any algorithm, consider an arbitrary time $t$ and history of item selections. By time $t$, at most distinct states have been observed thus far. Consider the following cases:
\begin{itemize}
    \item The algorithm plays a menu consistent with $\pi^*$ in every step from $t$ to $t+h-1$;
    \item The algorithm plays a menu inconsistent with $\pi^*$ at some step from $t$ to $t+h-1$. 
\end{itemize}
Suppose $t$ is less than $T = \frac{1}{2} \cdot (\frac{n}{2})^h = O(n^{\log(n)})$. In the former case, there is a uniform distribution over $\frac{n}{2}$ items chosen by the agent at each round, and so the maximum probability of any given state is at most $(\frac{n}{2})^h$.
Given that the set $G_s$ is generated independently at random for each state, an algorithm has no information about $G_s$ for unvisited states, and thus cannot improve expected reward beyond that obtained by choosing a random hypothesis for $G_s$, which incurs $O(\frac{1}{n})$ regret at round $t+h$. In the latter case, the step in which a menu inconsistent with $\pi^*$ is played additionally incurs $O(\frac{1}{n})$ regret.
Each event occurs once at least once in expectation every $h$ rounds while $t < T$, and thus any algorithm must have $O(\frac{1}{nh})$ expected regret per round up to $T$.
\end{proof}

\end{document}